\newtheorem{theorem}{Theorem}
\newtheorem{lemma}[theorem]{Lemma}
\newtheorem{definition}[theorem]{Definition}
\newtheorem{assumption}[theorem]{Assumption}
\newtheorem{condition}[theorem]{Condition}
\def\vf{{\bm{f}}}
\def\vu{{\bm{u}}}
\def\vv{{\bm{v}}}
\def\vw{{\bm{w}}}
\def\vx{{\bm{x}}}
\def\vy{{\bm{y}}}
\def\mI{{\bm{I}}}
\DeclareMathAlphabet{\mathsfit}{\encodingdefault}{\sfdefault}{m}{sl}
\SetMathAlphabet{\mathsfit}{bold}{\encodingdefault}{\sfdefault}{bx}{n}
\def\gD{{\mathcal{D}}}
\def\gF{{\mathcal{F}}}
\def\gH{{\mathcal{H}}}
\def\gL{{\mathcal{L}}}
\def\gR{{\mathcal{R}}}
\def\gX{{\mathcal{X}}}
\def\gY{{\mathcal{Y}}}
\def\sE{{\mathbb{E}}}
\def\sR{{\mathbb{R}}}
\DeclareMathOperator*{\argmax}{arg\,max}
\DeclareMathOperator*{\argmin}{arg\,min}
\renewcommand{\tilde}{\widetilde}
\renewcommand{\hat}{\widehat}
\newcommand{\vertiii}[1]{{\left\vert\kern-0.25ex\left\vert\kern-0.25ex\left\vert #1 
    \right\vert\kern-0.25ex\right\vert\kern-0.25ex\right\vert}}
\newcommand{\ours}{$\mathtt{F^2L^2}$}
\newcommand{\remove}{$\mathtt{FedRemoval}$}
\newcommand{\train}{$\mathtt{FLT}$}
\newcommand{\revision}[1]{{\color{black}#1}}
\newlength\myindent
\newcommand\bindent{%
  \begingroup
  \setlength{\itemindent}{\myindent}
  \addtolength{\algorithmicindent}{\myindent}
}
\newcommand\eindent{\endgroup}
\begin{document}

\title{Forgettable Federated Linear Learning with Certified Data Unlearning}

\author{Ruinan Jin, Minghui Chen, Qiong Zhang, and Xiaoxiao Li%
\thanks{This work was supported in part by the Natural Sciences and Engineering under Grant RGPIN-2022-0531 and in part by the Renmin University of China. (Corresponding author: Qiong Zhang.)}%
\thanks{This is the accepted manuscript of the article accepted for publication in \textit{IEEE Transactions on Neural Networks and Learning Systems}, doi: 10.1109/TNNLS.2026.3683398. This is not the final published version.}%
\thanks{\copyright{} 2026 IEEE. Personal use of this material is permitted. Permission from IEEE must be obtained for all other uses, in any current or future media, including reprinting/republishing this material for advertising or promotional purposes, creating new collective works, resale or redistribution to servers or lists, or reuse of any copyrighted component of this work in other works.}%
\thanks{Ruinan Jin, Minghui Chen, and Xiaoxiao Li are with the University of British Columbia, Vancouver, BC V6T 1Z4, Canada, and also with the Vector Institute, Toronto, ON M5G 0C6, Canada (e-mail: ruinanjin@alumni.ubc.ca; minghui.chen.research@outlook.com; xiaoxiao.li@ece.ubc.ca).}%
\thanks{Qiong Zhang is with the Renmin University of China, Beijing 100872, China (e-mail: qiong.zhang@ruc.edu.cn).}}

\IEEEpubid{}

\maketitle

\begin{abstract}
The advent of Federated Learning (FL) has revolutionized the way distributed systems handle collaborative model training while preserving user privacy. Recently, Federated Unlearning (FU) has emerged to address demands for the ``right to be forgotten'' and unlearning of the impact of poisoned clients without requiring retraining in FL. 
Most FU algorithms require the cooperation of retained or target clients (clients to be unlearned), introducing additional communication overhead and potential security risks. In addition, some FU methods need to store historical models to execute the unlearning process. These challenges hinder the efficiency and memory constraints of the current FU methods. Moreover, due to the complexity of nonlinear models and their training strategies, most existing FU methods for deep neural networks (DNN) lack theoretical certification. In this work, we introduce a novel FL training and unlearning strategy in DNN, termed Forgettable Federated Linear Learning (\ours{}). \ours{} considers a common practice of using pre-trained models to approximate DNN linearly, allowing them to achieve similar performance as the original networks via Federated Linear Training (\train). We then present \remove{}, a certified, efficient, and secure unlearning strategy that enables the server to unlearn a target client without requiring client communication or adding additional storage. We have conducted extensive empirical validation on small- to large-scale datasets, using both convolutional neural networks and modern foundation models. These experiments demonstrate the effectiveness of \ours{} in balancing model accuracy with the successful unlearning of target clients. \ours{} represents a promising pipeline for efficient and trustworthy FU. The code is available~\href{https://github.com/Nanboy-Ronan/2F2L-Federated-Unlearning}{here}.
\end{abstract}

\begin{IEEEkeywords}
Federated learning, Certified unlearning, Machine unlearning, Certified federated unlearning, Foundation models
\end{IEEEkeywords}

\section{Introduction}
\label{sec:intro}

\newif\ifcondition
\conditiontrue 

\ifcondition
\IEEEPARstart{F}{ederated} Learning (FL) enables collaborative model training across geographically distributed clients by aggregating locally trained models without centralized data collection. 
However, this decentralization exposes the global model to risks~\citep{fang2022robust, bagdasaryan2020backdoor,jin2023backdoor}: malicious clients\footnote{We refer to clients whose data must be deleted as \textit{target clients} and the rest as \textit{retained clients}.} may poison their updates (e.g., distorting disease diagnoses in healthcare FL), or copyrighted data in client contributions may introduce legal liabilities. 
\emph{To mitigate these risks, FL systems must efficiently remove harmful contributions—a capability known as Federated Unlearning (FU)}~\citep{nguyen2022survey,nguyen2024empirical}.

Yet, existing FU methods, as detailed in supplementary material~\ref{related}, face critical limitations that hinder real-world adoption. Naive solutions~\citep{nguyen2022survey,nguyen2024empirical}, such as retraining without the client(s) to be \revision{unlearned}, are prohibitively expensive due to computational and communication costs, as well as client unavailability. 
\revision{As summarized in Table~\ref{tab:baseline} in supplementary material~\ref{related}, no existing FU strategy simultaneously avoids extra communication, target and retained client involvement, and historical model storage while providing certified unlearning guarantees. In particular, each method violates at least one of these requirements as detailed below:}

\begin{itemize}[leftmargin=*]
\item \textbf{Dependence on retained clients}: Methods like~\cite{parisi2019continual,nguyen2024empirical,halimi2022federated,liu2024privacy,zhong2025unlearning} require retained clients to participate in unlearning, which is often infeasible due to computational constraints and communication overhead.

\item \textbf{Reliance on target clients}: Approaches such as~\cite{wu2022federated,li2021anti,halimi2022federated,gu2024ferrari} demand that target clients perform customized optimizations (e.g., gradient ascent, Lipschitz updates) on private data and return updates—a security risk if clients are malicious and unlikely to cooperate post-attack.

\item \textbf{Storage inefficiency}: FedEraser~\citep{liu2021federaser} and variants store all intermediate global models, incurring prohibitive memory costs for large-scale deep neural networks (DNNs).

\item \textbf{Theoretical verification gaps}: The nonlinearity of DNNs complicates the theoretical verification of success of the unlearning process.
\end{itemize}

These limitations motivate our core research question:
\begin{center}
\begin{tcolorbox}
[colframe=black!50!white, colback=gray!5!white, sharp corners=all, boxrule=0.2mm, rounded corners=southeast, arc is angular, width=8.5cm]
\centering
\textit{How can we design a practical FU framework that ensures certified unlearning without requiring retained client cooperation, target client participation, and historical model storage?}
\end{tcolorbox}
\end{center}

We propose the Forgettable Federated Linear Learning (\ours{}) framework, which integrates a novel training pipeline (Federated Linear Training, \train{}) with a server-side unlearning mechanism (\remove{}). 
Inspired by the neural tangent kernel~\citep{jacot2018neural} and its extensions to pretrained models~\citep{arora2019harnessing}, \train{} employs a first-order linear approximation of deep neural networks (DNNs) around a pretrained model. 
This linearization transforms the loss function into a quadratic form, ensuring convexity and simplifying optimization compared to non-convex objectives. Despite its simplicity, \train{} achieves performance competitive with traditional FL methods by strategically linearizing around a well-initialized pretrained model.

Crucially, the linear model enables a closed-form unlearning solution: the optimal post-unlearning weights are just a Newton step away from the original weights. \remove{} exploits this property by computing the Newton update using only the target clients' final gradient contributions—already shared during training—eliminating the need for additional client cooperation or historical model storage.

We further prove that the weights obtained via \remove{} remain provably close to those from full retraining. Specifically, we derive an upper bound on the $L2$-norm discrepancy between the two, certifying the effectiveness of our unlearning process.

In summary, our approach yields four key advantages as illustrated in Fig.~\ref{fig:fu_property}.
\begin{figure}[ht]
\centering
\includegraphics[width=0.8\linewidth]{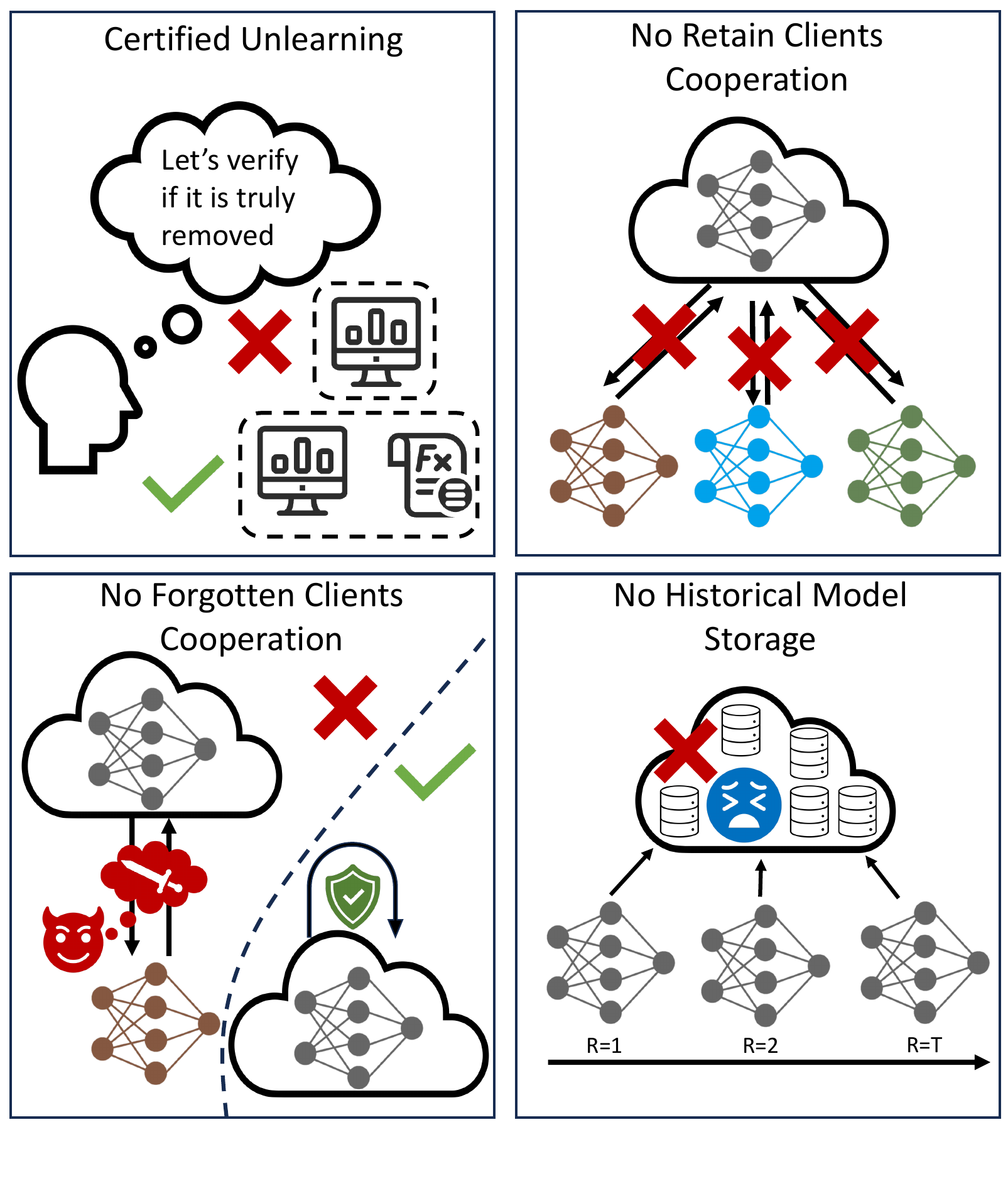}   
\caption{\textbf{Advantages of the proposed FU algorithm}. Our method is a certified unlearning approach without requiring retained client cooperation, target client participation, and historical model storage.}
\label{fig:fu_property}
\end{figure}

This paper advances federated unlearning through three key contributions:
\begin{itemize}[leftmargin=*]
    \item  \textbf{Novel FL framework}: We present \textbf{F}orgettable \textbf{F}ederated \textbf{L}inear \textbf{L}earning (\ours), the \textit{first} federated unlearning solution that combines \train{} (a distributed mixed-linear training approach) with \remove{} (a purely server-side unlearning mechanism). Specifically designed for FL systems, our framework eliminates both the need for client cooperation during unlearning and the storage overhead of historical models--addressing two fundamental limitations of existing methods.
    
    \item \textbf{Certified unlearning guarantees}: We establish rigorous theoretical foundations for federated unlearning by deriving a provable bound on the weight deviation between our approach and complete retraining from scratch. 
    
    \item \textbf{Comprehensive empirical evaluation}: Through extensive experiments on six diverse datasets and four model architectures (including foundation models like CLIP), we demonstrate \ours{}'s effectiveness in completely removing backdoor attacks while maintaining model performance. 
\end{itemize}

The rest of the paper is structured as follows. 
Sec.~\ref{preliminary} introduces the foundations of FL, unlearning under quadratic loss, and the notations used throughout the paper. Sec.~\ref{method:FFL} presents the complete \ours{} pipeline along with its theoretical properties. Sec.~\ref{exp} provides the experimental details, results, and discussions. Finally, we include a summary of notations, detailed theoretical analysis, and additional results in the supplementary material.

\else
\IEEEPARstart{F}{ederated} Learning (FL) facilitates collaborative model training across geographically dispersed data centers without centralizing private data. 
This approach aligns with privacy regulations such as the \textit{California Consumer Privacy Act}~\citep{ccpa}, the \textit{Health Insurance Portability and Accountability Act}~\citep{act1996health}, and the \textit{General Data Protection Regulation}~\citep{gdpr}. The most well-known FL methods, including FedAvg~\citep{mcmahan2017communication}, FedAdam~\citep{reddi2020adaptive}, and similar frameworks, use iterative optimization algorithms to enable concurrent model training among clients. 
In each round, clients perform stochastic gradient descent (SGD) over multiple steps and then send their model weights to server for aggregation.

Due to the intrinsic design of FL, which conceals data of private clients from the server, the aggregated model may inadvertently incorporate noisy or even malicious inputs from untrustworthy clients~\citep{fang2022robust, bagdasaryan2020backdoor}. 
For example, in a healthcare FL network, an attacker might manipulate their local model to distort predictions for rare disease markers, compromising the accuracy of the aggregated server model.
\textit{Alternatively}, the model may incorporate data containing copyrighted material, introducing legal risks to developers.
Retraining the model without the untrustworthy clients might address the issue;
however, this approach is often challenging due to the high financial and time costs, and remaining clients may not be available for additional rounds of training (e.g., due to loss of financial support).
In such situations, it is crucial that the FL organizers have the ability to efficiently remove harmful client contributions.
This requires an efficient and effective Federated Unlearning (FU) method, which involves removing specific traces of training data directly from the server model, ensuring its integrity and compliance with legal and ethical standards~\citep{liu2020federated, halimi2022federated}.

FU is a rapidly growing area of research, with many methods developed in recent years~\citep{nguyen2022survey,nguyen2024empirical}. 
However, a \emph{significant limitation} of the existing FU methods is their impracticality in real-world applications due to their reliance on retained clients (whose data must be preserved) or target clients (whose data must be unlearned).
Specifically, approaches proposed by~\cite{parisi2019continual,nguyen2024empirical,halimi2022federated,liu2024privacy} often require the cooperation of retained clients during unlearning, but \textit{ requesting the cooperation of clients is often impractical and inefficient} in practice. 
This is because retained clients may lack sufficient computational resources for unlearning and may struggle with the communication overhead involved.
Other approaches, such as those of~\cite{wu2022federated,li2021anti,halimi2022federated}, require the participation of target clients, asking them to perform a stochastic gradient ascent on their private data and return the model to the server.
However, \textit{this may introduce further security risks} if the target client is untrustworthy and possesses malicious inputs.
Such clients are also unlikely to voluntarily provide their poisoned data for post-attack clean-up.
\emph{Another limitation} of existing methods, such as FedEraser~\citep{liu2021federaser} and its variants, is memory inefficiency.
They require storing trained models for each global epoch, significantly increasing storage costs, which is especially impractical when model sizes are large and server memory is limited. 
\emph{Additionally}, verifying the success of the unlearning process theoretically presents challenges due to the complexity of nonlinear deep neural networks (DNNs).

In summary, the methods reviewed in the previous paragraphs and are outlined in Table~\ref{tab:baseline} show several notable limitations.
They often require the participation of retained or target clients, leading to substantial obstacles in operational efficiency, security, and privacy compliance. 
In addition, they typically lack mechanisms for certified data unlearning, a crucial requirement for regulatory adherence. 
These challenges highlight the need for a new FU strategy that is more flexible for client dependency, efficient in communication and storage, and theoretically certifiable. Motivated by these limitations, we ask the following research question that guides this work:

\begin{center}
\begin{tcolorbox}
[colframe=black!50!white, colback=gray!5!white, sharp corners=all, boxrule=0.5mm, rounded corners=southeast, arc is angular, width=8cm]
\centering
\textit{How can we design a federated unlearning framework that ensures certified unlearning without requiring client cooperation, forgotten client participation, or historical model storage?}
\end{tcolorbox}
\end{center}

\revision{In response to this challenge, we introduce the Forgettable Federated Linear Learning (\ours{}) framework, which incorporates novel certified unlearning strategies, \remove{}, supported by a new training pipeline called Federated Linear Training (\train{}).} 
Drawing inspiration from the neural tangent kernel~\citep{jacot2018neural} and its extension to pretrained models~\citep{arora2019harnessing}, we use a first-order linear approximation of deep neural networks (DNNs) around the pretrained model. 
By using a linear model, the loss becomes quadratic, making it convex and easier to minimize compared to non-convex losses.
In \train{}, we employ the commonly used FL aggregation method (FedAvg~\citep{mcmahan2017communication}) for optimization within the federated learning (FL) setting. Empirical results show that this training strategy achieves model performance comparable to traditional training methods. 
Additionally, the quadratic loss enables a closed-form relationship between the optimal weights before and after unlearning, showing that the optimal weight post-unlearning is just a Newton step away from the original optimal weight.
When unlearning is requested, \remove{} performs unlearning by computing the Newton update on the server, using only the gradients uploaded by the clients during their last epoch in FedAvg.
This allows for efficient removal of client data without requiring their further participation. 
\textit{Theoretically}, we provide an upper bound on the difference between the model weights from our approach and those from retraining the model from scratch, thereby certifying the unlearning process.

In summary, this paper makes the following contributions:
\begin{itemize}
    \item \textbf{Technically}, we propose \ours{}, a novel strategy to train FL models and enable clients to unlearn. \ours{} consists with an effective training method, \train{}, and removal method, \remove{}. The proposed \remove{} operates solely on the server, eliminating the need for client cooperation and requiring no additional storage. Based on a pre-trained model, this is achieved by \train{} that adapts mix-linear training in the distributed setting, and our innovative method to approximate the unlearning hessian matrix. 
    \item \textbf{Theoretically}, we present the certified unlearning by establishing an upper bound on the discrepancy between the model weights obtained through our proposed approach and those derived from retraining the model from scratch.
    \item \textbf{Empirically}, we conduct experiments on \textit{six} datasets and four model architectures, including popular foundation models like CLIP. Our results validate the performance of \ours{} by effectively unlearning backdoor attacks on targeted clients.
\end{itemize}

This paper is structured as follows. Sec.~\ref{related} reviews existing FU algorithms. Sec.~\ref{preliminary} introduces the foundations of FL, unlearning under quadratic loss, and the notations used throughout the paper. Sec.~\ref{method:FFL} presents the complete \ours{} pipeline along with its theoretical properties. Sec.~\ref{exp} provides the experimental details, results, and discussions. Finally, we include a summary of notations, detailed theoretical analysis, and additional results in the supplementary material.

\fi

\section{Preliminaries}
\label{preliminary}
This section begins by defining the problem. For easy reference, all the notation used in this manuscript are summarized in the supplementary material A.

\subsection{Standard FL Algorithm}
\label{sec:fedavg}
We first review the classical FL approach, FedAvg~\citep{mcmahan2017communication}, for classification. 
Consider a $K$ class classification task with feature space $\gX \subset \mathbb{R}^{d}$ and label space $\gY = [K] = \{1, 2, \ldots, K\}$. Let $\vf: \gX \to \mathbb{R}^{K}$ be a $K$ class classifier parameterized by weight $\vw$ such that
\[
p(y = k | \vx) = \sigma(f_k(\vx; \vw))
\]
where $f_k(\vx; \vw)$ is the $k$-th element of $\vf(\vx; \vw)$, and $\sigma(f_k) = \exp(f_k) / \sum_{k' = 1}^{K} \exp(f_{k'})$ is the softmax function.

In a classical machine learning (ML) problem, the weight $\vw$ of the classifier is learned based on a training dataset $\gD$.
Under the FL setting, the training dataset is not available on a single device and is partitioned over $C$ clients. 
Let $\gD_{c} = \{(x_c^i, y_c^i)\}_{i=1}^{n_c}$ be the training set on the $c$-th client. 
We consider the case where $\gD_{c}$ are independent identically distributed (IID) samples from the distribution $p(\vx;y)$.
We denote the full training set by $\gD= \{\gD_{1},\ldots,\gD_{C}\}$ and the total sample size $n=n_1+\ldots+n_C$.

Since each client $c\in[C]$ only has access to $\gD_c$ of size $n_c$, then the local empirical risk on $c$-th client becomes
\begin{equation*} 
    \gL(\vw;\gD_{c})=\frac{1}{n_c}\sum_{i=1}^{n_c}\ell(\vf(\vx_c^i;\vw),y_c^i),
\end{equation*}
where $\ell:\mathbb{R}^K\times\mathcal{Y}\to\mathbb{R}_{+}$ is a proper loss function such as the \emph{cross-entropy (CE) loss} or the \emph{mean squared error (MSE) loss}.
As a result, the overall loss based on the entire training dataset is $\gL(\vw;\gD)=\sum_{c=1}^C p_{c}\gL(\vw;\gD_{c})$ where $p_c = n_c/n$.

Since raw data sharing is neither feasible nor private-preserving under the FL setting, the goal of FedAvg is to let $C$ clients collaboratively train a global classification model $\vf$ that minimizes the overall loss $\gL(\vw;\gD)$, without sharing the local datasets $\{\gD_c\}_{c=1}^{C}$.
To achieve this goal, the FedAvg employs a server to coordinate the following iterative distributed training:
\begin{itemize}
    \item in the $r$-th global round of training, the server broadcasts its current model weight $\vw_s^{r-1}$ to all the clients;
    \item each client $c$ initialize the model with current server model weight $\vw_c^{r,0}=\vw_s^{r-1}$ and performs $M$ local step updates:
    \begin{equation*}
        \vw_c^{r, m}\leftarrow\vw_c^{r, m-1}-\eta_l\cdot g_{c}^{r,m-1}
    \end{equation*}
    for $m\in[M]$ where $\eta_l$ is the local learning rate and  $g_{c}^{r,m}$ is unbiased estimate of $\nabla\gL(\vw_c^{r, m};\gD_{c})$ based on the mini-batch at the $m$-th local step in $r$-th round. 
    \item each client sends $\vw_c^{r, M}$ back to the server and the server aggregates the updates from all clients to form the new server model weight:
    $\vw_s^{r} = \sum_{c=1}^Cp_c\vw_c^{r, M}$.
\end{itemize}
The above iterative procedure is performed for $R$ global rounds until some convergence criterion has been met.
After the training, given the learned model parameter $\vw_s^{R}$, the predicted label given a new observation $\tilde \vx$ can be obtained via $\hat y = \mathop{\argmax}_{k\in [K]} f_{k}(\tilde\vx;\vw_s^{R})$.

\subsection{Centralized Unlearning under Quadratic Loss}
Recall that $\gD$ represents the training dataset used to train a model in the centralized setting.
Let the forget set $\gD_{f}\subset \gD$ be a subset of the training data whose information must be erased from the trained model.
An unlearning procedure operates on the trained model weights to generate a new set of weights that are indistinguishable from those obtained by retraining the model from scratch on the remaining dataset $\gD^{-}=\gD\backslash\gD_{f}$.

An important example of removal in a centralized setting is the linear regression problem~\citep{guo2020certified}, where the loss function is quadratic in model parameters:
\[\gR(\vw;\gD) = |\gD|^{-1}\sum_{(\vx,y)\in \gD} \|y - \vw^{\top} \vx\|^2.\]
Let $\hat \vw = \argmin_{\vw} \gR(\vw;\gD)$ and $\vw^{-} = \argmin_{\vw} \gR(\vw;\gD^{-})$.
Since $\gR(\vw;\gD^{-})$ is quadratic in $\vw$ and is hence convex, we must have $0 = \nabla \gR(\vw^{-};D^{-})$ and
the Taylor expansion of the gradient at $\hat \vw$ gives
\begin{equation*}
0 = \nabla \gR(\vw^{-};\gD^{-})= \nabla \gR(\hat \vw; \gD^{-}) + \nabla^2 \gR(\hat \vw; \gD^{-}) (\vw^{-} - \hat{\vw}).   
\end{equation*}
As a result,
\begin{equation}
\label{eq:removal}
\begin{split}
\vw^{-} &= \hat{\vw} - \left\{\nabla^2 \gR(\hat \vw; \gD^{-})\right\}^{-1}\nabla \gR(\hat \vw; \gD^{-})\text{\revision{.}} \\ 
\end{split}
\end{equation}

Building upon~\eqref{eq:removal}, it becomes unnecessary to retrain the entire model. 
Instead, one can perform a simple Newton step to obtain the optimal weight based on the remaining dataset.
Unfortunately, for more general ML models with nonquadratic losses,~\eqref{eq:removal} is not accurate if applied directly.

We generalize this concept to unlearning under FL by first linearizing the DNN in the parameter space. 
The primary challenge in applying~\eqref{eq:removal} is to calculate Hessian $\nabla^2\gR$ and its inverse, given the significantly large dimensions in DNNs. 
In the FL setting, where the remaining dataset is distributed across different clients, this computation becomes even more expensive, especially if communication among clients is required. 
Therefore, overcoming this computational challenge is crucial to achieving effective unlearning.

\begin{figure*}
    \centering
    \includegraphics[width=0.9\textwidth]{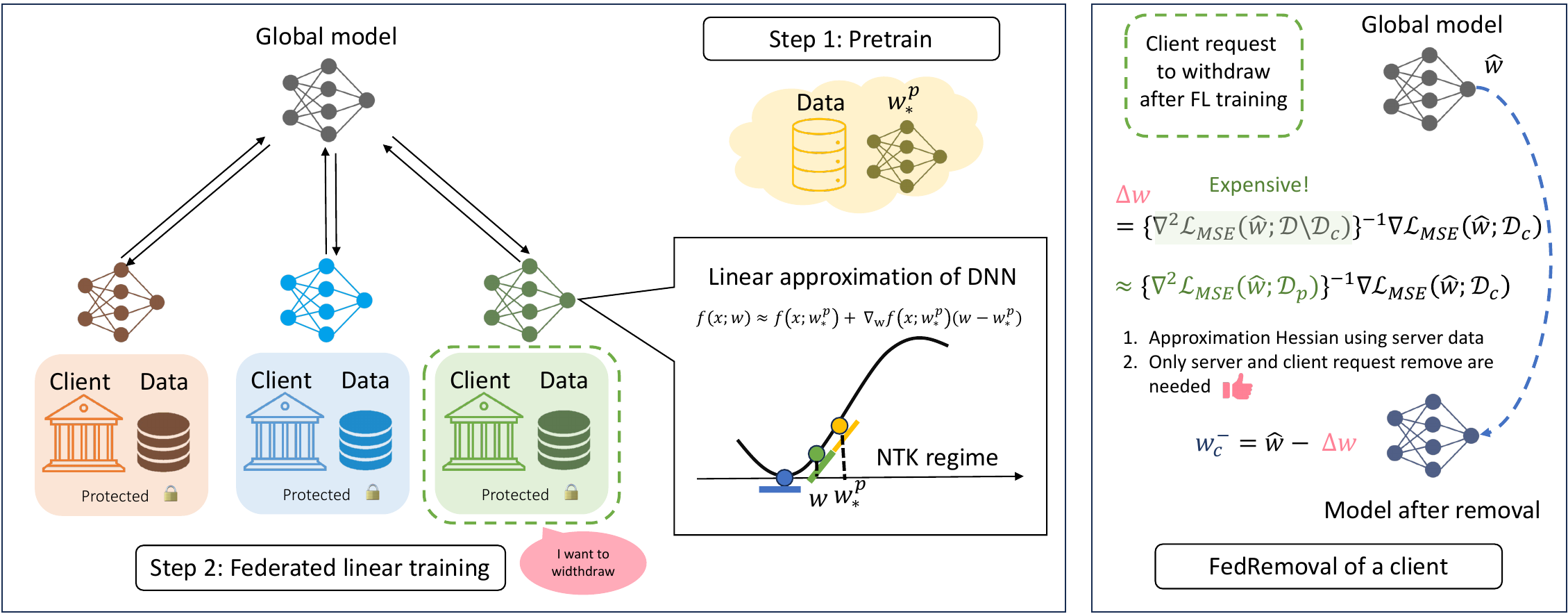}
    \captionsetup{justification=raggedright}
    \caption{\textbf{Illustration of the problem setting and our proposed Forgettable Federated Linear Learning (\ours{}) framework.} Our proposed algorithm enables the FL system to seamlessly remove a client's information from a linear model by utilizing a simple Newton's step. To achieve this, \train{} uses a linearized DNN with FedAvg during FL. When unlearning is required for a specific client, we perform a straightforward Newton step on the model weights of the linearized DNN. To mitigate the additional communication costs associated with FL, we employ an efficient Hessian approximation in the removal step. This approach ensures both efficient unlearning and communication in FL.}
    \label{fig:enter-label}
\end{figure*}

\section{\ours{} Pipeline}
\label{method:FFL}
In our \ours{} pipeline, we first introduce a novel FL training strategy called Federated Linear Training (\train). 
We then develop a tailored certified removal strategy with theoretical guarantees, termed \remove{}. 
These training and removal strategies are coupled to ensure efficient and certified data removal. 
\train{} maintains the same communication overhead as FedAvg (or any other federated aggregation strategy the user chooses), while \remove{} is executed solely on the server, requiring no additional communication with clients. 

\subsection{Federated Linear Training (\train)}
\begin{algorithm}[htbp]
\begin{algorithmic}
\bindent
    \STATE{\textbf{Input}: initial weight $\vw_s^{0}$, learning rate $\eta_l$, clients' dataset $\{\gD_{c}\}_{c=1}^{C}$, $p_c =n_c/n$}
    \FOR {$r = 0,1,\ldots, R-1$}
    \STATE Broadcast server weight $\vw_c^{r,0} = \vw_s^r$ to $C$ clients\;
    \FOR {$c = 1,\ldots, C$}
    \FOR {batch $b_m$, $m=1,\ldots, M$}
    \STATE{$g_{c}^{r,m} = (2|b_m|)^{-1}\sum_{i \in b_m}\nabla \|\tilde\vf(\vx_c^i;\vw_{c}^{r,m})-\vy_c^i\|^2+ \mu\vw_{c}^{r,m}$}
    \STATE{$\vw_{c}^{r,m}\leftarrow\vw_{c}^{r,m-1} - \eta_{l} g_{c}^{r,m-1}$}
    \ENDFOR
    \STATE{Transmit the client weight $\vw_{c}^{r,M}$ and gradients $\nabla \gL_{\text{MSE}}(\vw_{c}^{r,M};\gD_{c})$ to server}
    \ENDFOR
    \STATE{Aggregate on server via $\vw_{s}^{r+1} \leftarrow \sum_{c=1}^{C} p_{c}\vw_{c}^{r,M}$}
    \ENDFOR
    \RETURN{$\vw_{s}^{R}$}
\eindent
\caption{Proposed \train{} pipeline.}
\label{alg:learning}  
\end{algorithmic}
\end{algorithm}
Motivated by the fact that unlearning is much easier under a quadratic loss function compared to a general loss function, we design a novel FL method that leverages a quadratic loss function during training. 
At a high level, we propose using a linear approximation of a DNN based on the first-order Taylor expansion:
\begin{equation}
\label{eq:linear_approximation}
\vf(\vx;\vw) \approx \vf(\vx; \vw_{*}^{p}) + \langle \nabla_{w} \vf(\vx;\vw_{*}^{p}), \vw  - \vw_*^{p}\rangle\text{\revision{.}} 
\end{equation}
at some $\vw_{*}^{p}$ when $\vw$ is close to $\vw_{*}^{p}$.
The right-hand side of~\eqref{eq:linear_approximation} is linear in the model weight $\vw$, enabling unlearning via~\eqref{eq:removal} under the quadratic MSE loss.

To train the linear model, we first find a suitable initial value $\vw_{*}^{p}$ and then perform the linear approximation. 
Since at least one trustworthy client will not remove its dataset from the FL system, we use their data to determine $\vw_{*}^{p}$. 
For simplicity, we treat this client as the server and denote its local dataset as $\gD_{p} = {(\vx_p^i, y_p^i)}_{i=1}^{n_p}$ where $n_p$ the number of training samples on the server.
\textit{This dataset will not be shared with any other FL participants, thus avoiding additional privacy leakage beyond what is typical in FedAvg.} 
The server will then execute \remove{}, while the remaining $C$ clients, each with its own dataset $\gD_{c}$, train the linear approximation model.

\noindent \textbf{Initial $w_{*}^p$.} The weight $w_{*}^p$ is crucial to provide a good starting point for linear approximation in the Taylor expansion. 
It can be obtained by using either the pre-trained model on publicly available data or fine-tuning the pre-trained FM on $\gD_{p}$. 
For example, we can either use pre-train models on ImageNet or apply popular FMs like CLIP~\citep{radford2021learning}.
In both ways, the weight of the learned model is given by
$$\vw_{*}^{p} = \argmin_{\vw} \gL(\vw;\gD_{p}) = \underset{\vw}{\argmin} \; n_p^{-1}\sum_{i=1}^{n_p}\ell(\vf(\vx_p^i;\vw),y_p^i).$$

\noindent \textbf{Linear expansion at $w_{*}^p$.} Given that \(\vw_{*}^{p}\) is close to the optimal model weight based on the full dataset \(\gD\), we perform a first-order Taylor expansion at \(\vw_{*}^{p}\) as shown in~\eqref{eq:linear_approximation}. This yields the approximated linear model:
\begin{equation}
    \label{eq:mixed-linear-model}
    \tilde\vf(\vx; \vw) = \vf(\vx; \vw_*^{p}) + \langle \nabla_{\vw} \vf(\vx; \vw_*^{p}), \vw - \vw_*^{p} \rangle.
\end{equation}
Note this approximation is equivalent to a kernel predictor with a kernel known as the neural tangent kernel (NTK)~\citep{jacot2018neural}, defined as:
\[
k(\vx, \vx') = \nabla_{\vw} \vf(\vx; \vw_*^{p}) \nabla_{\vw}^{\top} \vf(\vx; \vw_*^{p}),
\]
which defines a neural tangent space in which the relationship between weights and functions is linear. 
As the width of the network approaches infinity,~\eqref{eq:linear_approximation} becomes exact and remains valid throughout the training.

To train the linear model $\tilde{\vf}$ on $\{\gD_{c}\}_{c=1}^{C}$, we use the following local objective function
\begin{equation*}
    \gL_{\text{MSE}}(\vw;\gD_{c}) = \frac{1}{2n_c} \sum_{i=1}^{n_c} \|\tilde\vf(\vx_c^i;\vw)-\vy_c^i\|^2 + \frac{\mu}{2}\|\vw\|^2.
\end{equation*}
where $\vy_c^i$ is the one-hot representation of $y_i^{c}$ and $\gL_{\text{MSE}}$ is the MSE loss.
The overall loss then becomes
\begin{equation}
\label{eq:overall_loss}
\gL_{\text{MSE}}(\vw;\gD) =\sum_{c=1}^{C} p_{c}\gL_{\text{MSE}}(\vw;\gD_{c})\text{\revision{,}}
\end{equation}
and the goal of the linear training step is to find
\begin{equation}
\label{eq:global_optimal}
\vw^* = \argmin \gL_{\text{MSE}}(\vw;\gD)
\end{equation}
without the need to communicate the raw data across clients.

The MSE loss in~\eqref{eq:overall_loss} corresponds to the standard loss for a ridge regression problem, and it has a closed-form minimizer:
\[
\begin{split}
\vw^* =&~\{\nabla_{w} \vf(\gD_c;\vw_*^{p})\nabla_{w}^{\top} \vf(\gD_c;\vw_*^{p}) + \mu \mI\}^{-1}\\
&\times \nabla_{w} \vf(\gD_c;\vw_*^{p})\{\vf(\vx; \vw_*^{p}) - \nabla_{\vw}^{\top} \vf(\vx; \vw_*^{p}) \vw_*^{p}-\vy_c\}.    
\end{split}
\]
The $L_2$ regularization is used in~\eqref{eq:overall_loss} to ensure $\vw^*$ is well defined.
\emph{However, the closed-form minimizer involves the inversion of $\{\nabla_{w} \vf(\gD;\vw_*^{p})\nabla_{w}^{\top} \vf(\gD;\vw_*^{p}) + \mu \mI\}^{-1}$, which is typically high-dimensional and computationally infeasible.}
To address this, we use the FedAvg algorithm described in Sec.~\ref{sec:fedavg} as an alternative to minimize to minimize the overall empirical loss~\eqref{eq:overall_loss}. 
The proposed training pipeline is outlined in Algorithm~\ref{alg:learning}.

Our proposed \train{} can be easily combined with foundation models (FM) without additional cost using the technique of linear probing~\citep{tsouvalas2023federated}.
Let us denote the model parameters as $\vw = (\theta, \phi)$, where $\theta$ represents the fixed parameters of the pre-trained FM encoder, and $\phi$ represents the linear layer. During \train, $\theta$ remains fixed, and only $\phi$ is updated. 
Consequently, the local loss becomes:
\begin{equation*} 
    \gL(\phi; \gD_{c}) = \frac{1}{n_c}\sum_{i=1}^{n_c}\ell\left(\vf(\vx_c^i; \theta, \phi), y_c^i\right),
\end{equation*}
and the rest of the algorithms can be applied to the loss function that only depends on the weights of the linear layer.

For the proposed \train{} algorithm, we have the following contraction property.
\begin{lemma}[Contraction under FedAvg algorithm (Informal)]
\label{lemma:fedavg_progress}
Suppose the local loss $\gL_{c}(\vw) = \gL_{\text{MSE}}(\vw;\gD_{c})$ is $\beta$-smooth and $\mu$-strongly convex. 
Let $\vw_c^*=\argmin \gL_{c}(\vw)$, $\vw^* = \argmin \gL(\vw)$ where $\gL(\vw) = \sum_{c=1}^{C}p_{c}\gL_{c}(\vw)$.
Let $\sigma^2$ be the upper bound of the variance of the stochastic gradient $g_{c}^{r,m}$. 
Let $\vw_{s}^{r}$ be the output of Algorithm~\ref{alg:learning} in $r$th iteration with step size $\eta_l \leq (\beta M)^{-1}$, then we have
\[\sE\|\vw_s^{r}-\vw^*\|^2\leq \left(1-\frac{\mu}{4\beta M}\right)^{r}\sE\|\vw_{s}^{0}-\vw^*\|^2 + \Delta\] where $\sigma_{\gL} = \sum_{c} p_{c}\{\gL_{c}(\vw^*) -\gL_{c}(\vw_c^*)\}$ and $\Delta = 4\sigma_{\gL}/\mu + 120\sigma^2/(\mu \beta M)$. 
\end{lemma}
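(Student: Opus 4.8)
The plan is to run the standard perturbed-iterate (virtual averaged sequence) analysis of FedAvg, specialized to a fixed local step size, and then convert the resulting per-round contraction into the stated geometric bound. First I would introduce the virtual averaged iterate $\bar\vw^{r,m} = \sum_{c=1}^{C} p_c \vw_c^{r,m}$, which coincides with the broadcast weight $\vw_s^{r-1}$ at $m=0$ (all clients are synchronized at the start of a round) and with the aggregated weight $\vw_s^{r}$ at $m=M$. Its one-step update is $\bar\vw^{r,m+1} = \bar\vw^{r,m} - \eta_l \sum_c p_c g_{rmc}(\vw_c^{r,m})$, whose stochastic part is conditionally unbiased for $\sum_c p_c \nabla\gL_c(\vw_c^{r,m})$ with conditional variance controlled by $\sigma^2$. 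Expanding $\sE\|\bar\vw^{r,m+1}-\vw^*\|^2$ and taking conditional expectation splits the error into a correlation term $-2\eta_l\langle\bar\vw^{r,m}-\vw^*,\sum_c p_c\nabla\gL_c(\vw_c^{r,m})\rangle$ and a second-moment term $\eta_l^2\sE\|\sum_c p_c g_{rmc}\|^2$.

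Next I would bound these two contributions. For the second moment I split off the variance (bounded by $\eta_l^2\sigma^2$) and control the squared mean via $\beta$-smoothness, $\|\nabla\gL_c(\vw)\|^2\le 2\beta\{\gL_c(\vw)-\gL_c(\vw_c^*)\}$. For the correlation term I decompose $\bar\vw^{r,m}-\vw^* = (\bar\vw^{r,m}-\vw_c^{r,m})+(\vw_c^{r,m}-\vw^*)$, apply $\mu$-strong convexity to the second inner product and Young's inequality plus smoothness to the first. The strong-convexity step produces $\|\vw_c^{r,m}-\vw^*\|^2$, which I convert to the averaged distance through $\|\vw_c^{r,m}-\vw^*\|^2 \ge \tfrac12\|\bar\vw^{r,m}-\vw^*\|^2 - \|\vw_c^{r,m}-\bar\vw^{r,m}\|^2$; the factor $\tfrac12$ here is what ultimately produces the $\tfrac14$ in the contraction rate. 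Collecting terms, the values $\gL_c(\vw^*)$ and $\gL_c(\vw_c^*)$ appear with coefficients that, after weighting by $p_c$, assemble into the heterogeneity quantity $\sigma_\gL = \sum_c p_c\{\gL_c(\vw^*)-\gL_c(\vw_c^*)\}$, while the residual $\sum_c p_c\sE\|\vw_c^{r,m}-\bar\vw^{r,m}\|^2$ is the \emph{client drift}.

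The main obstacle is bounding this client drift. Since every client restarts from the synchronized point $\bar\vw^{r,0}=\vw_s^{r-1}$, the drift vanishes at $m=0$ and accumulates over the $M$ local steps; I would unroll the local updates and use the step-size restriction $\eta_l\le\tfrac{1}{\beta M}$, which is precisely what keeps the accumulated drift proportional to $\eta_l^2 M$ times a smoothness/variance factor rather than growing without control, thereby introducing $\sigma^2$ and $\sigma_\gL$. Summing the per-step inequality over $m=0,\dots,M-1$ within one round and retaining (conservatively) a single step's worth of the strong-convexity decrease—the rest being spent to dominate the drift—yields a per-round recursion
\[\sE\|\vw_s^{r}-\vw^*\|^2 \le \Big(1-\tfrac{\mu}{4\beta M}\Big)\sE\|\vw_s^{r-1}-\vw^*\|^2 + \delta, \qquad \delta = \tfrac{\sigma_\gL}{\beta M} + \tfrac{30\sigma^2}{\beta^2 M^2}.\]

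Finally I would unroll this recursion over $r$ rounds and sum the geometric series $\sum_{j\ge 0}\big(1-\tfrac{\mu}{4\beta M}\big)^{j} \le \tfrac{4\beta M}{\mu}$, which turns the per-round additive error into $\Delta = \delta\cdot\tfrac{4\beta M}{\mu} = \tfrac{4\sigma_\gL}{\mu}+\tfrac{120\sigma^2}{\mu\beta M}$, matching the claimed constants exactly. Everything outside the drift bound is a mechanical combination of strong convexity, smoothness, and Young's inequality; the genuinely delicate part is the drift estimate and the associated constant tracking (the $30$ and hence the $120$), which is also why the statement is flagged as informal and the resulting per-round rate $1-\tfrac{\mu}{4\beta M}$ is conservative rather than tight.
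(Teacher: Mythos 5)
Your proposal is correct and takes essentially the same route as the paper's proof: a SCAFFOLD-style analysis combining the perturbed strong convexity inequality (Lemma~\ref{lemma:perturbed_strong_convexity}), the smoothness bound $\|\nabla\gL_{c}(\vw)\|^2\leq 2\beta\{\gL_{c}(\vw)-\gL_{c}(\vw_c^*)\}$, a client-drift bound obtained by unrolling the $M$ local steps under $\eta_l\leq(\beta M)^{-1}$, and the geometric sum $\sum_{\tau}\left(1-\frac{\mu}{4\beta M}\right)^{\tau}\leq \frac{4\beta M}{\mu}$ converting the per-round error into $\Delta$, with your constants ($30$, hence $120$) matching the statement. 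The only cosmetic difference is that you telescope a per-local-step virtual averaged sequence $\bar\vw^{r,m}=\sum_{c}p_c\vw_c^{r,m}$ and measure drift from $\bar\vw^{r,m}$, whereas the paper expands the whole round's aggregated update $\Delta\vw^{r}$ in one step and measures drift from the synchronized point $\vw_s^{r-1}$; these produce the same terms and the same recursion.
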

This lemma implies that our \train{} algorithm converges linearly to the true optimum, with a bias term $\Delta$ introduced by stochastic gradient descent, consistent with the general case.

\subsection{Proposed Removal (\remove{})}
After training the model $\tilde\vf(\vx; \vw^*)$, the client $c$ can request to unlearn its training dataset $\gD_{c} \subseteq \gD_{f}$ from the model. Recall that $\gD^{-} = \gD\backslash\gD_{f}$.
Employing the quadratic unlearning approach described in~\eqref{eq:removal}, we derive the model weight based on the remaining dataset as follows:
\begin{equation*}
\vw^{-} = \vw^* - \underbrace{\left\{\nabla^2 \gL_{\text{MSE}}(\vw^*;\gD^{-} )\right\}^{-1}\nabla \gL_{\text{MSE}}(\vw^*;\gD^{-} )}_{(*)}.
\end{equation*}

Here, $\gL_{\text{MSE}}(\vw;\gD^{-}) = \sum_{c'\neq c} \tilde p_{c'}\gL_{\text{MSE}}(\vw;\gD_{c'})$, where $\tilde p_{c'} = n_{c'}/(n-n_c)$. 
However, this step necessitates Hessian inversion, whose computational cost is cubic in the number of parameters. 
Consequently, such a computation becomes infeasible for DNNs with millions of parameters.
In the centralized setting, this computational challenge can be addressed~\citep{golatkar2021mixed} by using SGD to find the minimizer of
\begin{align*}
\gF(\vv) = &~\frac{1}{2}\vv^{\top}\nabla^2\gL_{\text{MSE}}(\vw^*;\gD^{-} )\vv - \nabla^{\top} \gL_{\text{MSE}}(\vw^*;\gD^{-} )\vv \\
=&~\frac{1}{2(n-n_c)}\underbrace{\sum_{\vx\in \gD^{-}} \|\nabla_{w}\vf(\vx;\vw_*^p)\vv\|^2}_{(A)} + \frac{\mu}{2}\|\vv\|^2\\
&- \underbrace{\nabla^{\top} \gL_{\text{MSE}}(\vw^*;\gD^{-} )\vv}_{(B)}.
\end{align*}
This objective is used because $(*)$ is the unique global minimizer of $\gF(\vv)$, and SGD only involves the computation of gradients--a process whose cost scales linearly with the number of parameters, making it significantly more cost-effective than directly computing $(*)$.
\emph{This approach suffers from high communication cost in the FL setting as we detail below.}

The numerical solver requires the evaluation of $\gF(\vv)$ to use autograd in PyTorch. 
The term (B) is easy to evaluate as long as the gradient term $\nabla^{\top} \gL_{\text{MSE}}(\vw^*;\gD^{-})$ is known. 
This is straightforward and incurs no additional cost, as the gradients are sent to the server as part of FedAvg. 
The most computationally expensive part is (A) in the FL setting. 
Note that in a single SGD step, \textit{the exact evaluation of $(A)$ requires one round of communication with all retained clients for a given value of $\vv$}. 
Since multiple SGD iterations are necessary to compute the Hessian, this may further exacerbate communication costs. 
To address this concern, we present the following approximation procedure and provide a theoretical quantification of its approximation error.

\noindent
\textbf{Hessian approximation}
We estimate term $(A)$ based on the retained client gradient using the pre-trained data. 
Specifically, we update the weights via
\begin{equation}
\label{eq:forgetting_surrogate}
\vw_{c}^{-} = \vw^*  - \Delta \vw\text{,} 
\end{equation}
where $\Delta \vw = \argmin_{\vv}\tilde{\gF}(\vv)$ and
\begin{equation*}
\begin{split}
\tilde{\gF}(\vv) = &~\frac{1}{2n_{p}}\sum_{\vx\in \gD_p} \|\nabla_{w}\vf(\vx;\vw_*^p)\vv\|_2^2 + \frac{\mu}{2}\|\vv\|_2^2 \\
&~- \nabla^{\top} \gL_{\text{MSE}}(\vw^*;\gD^{-} )\vv.
\end{split}
\end{equation*}

We further use the computational trick in~\cite{mu2020gradients} to efficiently compute the Jacobian-Vector product $\nabla_{w}\vf(\vx;\vw_*^p)\vv$ with a single forward pass.
The proposed approximation offers several advantages, originating mainly from the fact that the objective $\tilde{\gF}(\vv)$ now relies solely on $D_{p}$. 
Consequently, the unlearning process described in~\eqref{eq:forgetting_surrogate} only requires the server and does not need the involvement of any clients in the system.
This characteristic aligns with real-world applications, which improves the feasibility and practicality of our approach.
Moreover, the \remove{} can also be applied for FMs with linear probing without any additional cost, similar to the reasons for \train{} with FMs.
The proposed FU algorithm, termed \remove{}, is given in Algorithm~\ref{alg:forgetting}.

\begin{algorithm}[htp]
\begin{algorithmic}
\STATE \textbf{Input}: output $\hat\vw$ of Alg.~\ref{alg:learning}, learning rate $\eta_{r}$, initial $\vv$, pre-train dataset $\gD_{p}$
\STATE Local machine send gradient to compute $\nabla \gL_{\text{MSE}}(\hat\vw; \gD^{-})$
\FOR {batch $b$ with size $B$ in $1,\ldots, T$}
\STATE{Let $\hat g$ be the gradient of $\tilde \gF$ based on batch $b$ at $\vv$}
\STATE{$\vv \leftarrow \vv - (\eta_{r}/B)\hat g$}
\ENDFOR

\STATE On server, update $\vw^{-} \leftarrow \hat\vw - \vv$
\RETURN{$\vw^{-}$}
\caption{Proposed \remove{}}
\label{alg:forgetting}  
\end{algorithmic}
\end{algorithm}

\subsection{Theoretical Properties}

In this section, we quantify the theoretical difference between the model weight of our proposed method and the one based on retraining from scratch.

Recall that we have $C$ clients with $\gD_{c}$ being the dataset on the $c$-th client.
We first train a linear model $\tilde \vf(\vx;\vw)$ using our proposed method with all datasets $\{\gD_{c}\}_{c=1}^{C}$ and let $\vw^*$ be as defined in~\eqref{eq:global_optimal}.
We consider the scenario in which the client $c$ decides to withdraw from the FL system and asks to remove the information from its dataset from the learned model weight $\hat\vw$ using \train{}.
Consider two different approaches:
\label{method:theo}
\begin{enumerate}
    \item \textbf{Retrain}: Let $\hat\vw^{r}_{c}$ be the weights obtained by retraining from scratch using our proposed FedAvg without the $c$-th client's dataset.

    \item \textbf{Removal}: The proposed unlearning procedure in~\eqref{eq:forgetting_surrogate} and let 
    $\hat \vw^{-}_{c} = \hat\vw  -\Delta \vw\text{\revision{.}}$
    where $\Delta \vw$ is the SGD output that minimizes $\tilde{\gF}(\vv)$.
\end{enumerate}
The difference between $\hat \vw_{c}^{-}$ and $\hat\vw_c^{r}$ has the following upper bound.

\begin{theorem}[Removal and retrain model weight difference (informal)] Let the notation be the same as Lemma~\ref{lemma:fedavg_progress}.
Let $B$ be the SGD batch size in~\eqref{eq:forgetting_surrogate}.
Let $\tilde F_i(\vw) =  \|\nabla_{w}\vf(\vx_{p}^{i};\vw_*^p)\vw\|^2/2 + \mu\|\vw\|^2/2 -\nabla^{\top}\gL_{\text{MSE}}(\hat{\vw};\gD^-)\vw$, $w_i^* = \argmin \tilde{F}_i(\vw)$, and $\xi_{\tilde F} = n_p^{-1} \sum_{i=1}^{n_p} \{\tilde F_i(\vw^*) - \tilde F_i(\vw_i^*)\}$.
As $T\to\infty$, we have
\begin{align*}
\sE\|\hat \vw_{c}^{-} - \hat\vw^{r}_{c}\|^2\leq \underbrace{4\Delta}_{(1)}+ \underbrace{\frac{4\xi_{\tilde F}}{B\beta}}_{(2)} + \underbrace{\frac{4\|\Delta G\|^2}{\lambda_{\min}^2(\nabla^2\gL(\gD_{p}))}}_{(3)}\text{\revision{.}}
\end{align*}
where $G(\gD) = |\gD|^{-1}\sum_{\vx\in \gD} \nabla_{w} \vf(\vx; \vw_*^{p})\nabla_{w}^{\top} \vf(\vx; \vw_*^{p})$ is the gram matrix of an NTK kernel based on dataset $\gD$, $\Delta G = G(\gD^{-}) -G(\gD_{p})$ is the difference between two-gram matrices, and $\lambda_{\min}$ is the smallest eigenvalue.
\end{theorem}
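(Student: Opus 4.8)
The plan is to bound the target through a triangle-inequality decomposition across two intermediate reference weights, and then to control each piece with a tool already in hand: the FedAvg contraction in Lemma~\ref{lemma:fedavg_progress} for the retraining error, a single-machine SGD convergence estimate for the surrogate-minimization error, and a matrix-perturbation (resolvent) identity for the Hessian-approximation error. The three summands in the stated bound correspond exactly to these three sources of error.

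First I would introduce two reference quantities. Let $\vw^* = \argmin_\vw \gL_{\text{MSE}}(\vw;\gD^{-})$ be the exact minimizer on the remaining data; since the linearized loss is quadratic, a single Newton step is exact, so by~\eqref{eq:removal} we have $\vw^* = \hat\vw + \tilde p_c \Delta\vw^*$ with $\Delta\vw^* = \{\nabla^2\gL(\hat\vw;\gD^{-})\}^{-1}\nabla\gL(\hat\vw;\gD_c)$, using that the converged $\hat\vw$ sends $\nabla\gL(\cdot;\gD)$ to zero. Let $\Delta\vw^{**} = \argmin_\vv \tilde\gF(\vv;\gD_s,\gD_c) = \{\nabla^2\gL(\hat\vw;\gD_s)\}^{-1}\nabla\gL(\hat\vw;\gD_c)$ be the exact surrogate minimizer, which differs from $\Delta\vw^*$ only in that the server Gram matrix replaces the remaining-data Gram matrix. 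Applying $\|\va+\vb\|^2 \le 2\|\va\|^2 + 2\|\vb\|^2$ and using $\vw_c^- - \vw^* = \tilde p_c(\Delta\vw - \Delta\vw^*)$ gives
\[\sE\|\vw_c^- - \hat\vw^{r}_c\|^2 \le 2\,\sE\|\vw^* - \hat\vw^{r}_c\|^2 + 2\tilde p_c^2\,\sE\|\Delta\vw - \Delta\vw^*\|^2.\]
The first summand is precisely the squared distance between the FedAvg retraining iterate on $\gD^{-}$ and its true minimizer $\vw^*$, so Lemma~\ref{lemma:fedavg_progress} bounds it (discarding the geometric contraction factor for large $r$) by $\Delta$, producing term $(1)=2\Delta$.

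For the second summand I would split once more, $\Delta\vw - \Delta\vw^* = (\Delta\vw - \Delta\vw^{**}) + (\Delta\vw^{**} - \Delta\vw^*)$, incurring a factor $4\tilde p_c^2$ on each piece. The piece $\Delta\vw - \Delta\vw^{**}$ is the optimization error of the SGD minimizing the $\mu$-strongly-convex, $\beta$-smooth finite sum $\tilde\gF = n_s^{-1}\sum_i \tilde F_i$; rerunning the Lemma-style analysis on a single machine with batch size $B$ leaves, after the contraction term vanishes, the noise floor $\xi_{\tilde F}/(B\beta)$, and since $\tilde p_c \le 1$ this yields term $(2)=4\xi_{\tilde F}/(B\beta)$. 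The piece $\Delta\vw^{**} - \Delta\vw^*$ is deterministic: writing $H_s = \nabla^2\gL(\hat\vw;\gD_s)$ and $H^{-} = \nabla^2\gL(\hat\vw;\gD^{-})$, the resolvent identity $H_s^{-1} - (H^{-})^{-1} = H_s^{-1}(H^{-} - H_s)(H^{-})^{-1}$ combined with $H^{-} - H_s = G(\gD^{-}) - G(\gD_s) = \Delta G$ gives $\Delta\vw^{**} - \Delta\vw^* = H_s^{-1}\,\Delta G\,(H^{-})^{-1}\nabla\gL(\hat\vw;\gD_c)$; bounding operator norms by reciprocal minimum eigenvalues (and suppressing the client-gradient norm factor in this informal statement) yields term $(3) = 4\tilde p_c^2\|\Delta G\|^2 / \lambda_{\min}^2(\nabla^2\gL(\gD_s))$.

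The main obstacle I expect is the SGD piece behind term $(2)$: unlike the clean deterministic resolvent bound, it requires redoing the stochastic convergence analysis for the surrogate $\tilde\gF$, tracking the stochastic-gradient variance at the optimum through the heterogeneity quantity $\xi_{\tilde F}$, verifying the step-size condition so the geometric factor genuinely contracts, and confirming that $\tilde\gF$ inherits the $\mu$-strong-convexity and $\beta$-smoothness assumed in Lemma~\ref{lemma:fedavg_progress}. A secondary subtlety is justifying $\nabla\gL(\hat\vw;\gD)=0$, which makes the exact-removal identity for $\vw^*$ valid; this is where the converged-FedAvg assumption enters, and any residual full-data gradient would have to be absorbed into $\Delta$.
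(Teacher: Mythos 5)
Your proposal is correct and follows essentially the same route as the paper's proof: the identical two-level decomposition through the remaining-data minimizer (your $\vw^*$, the paper's $\vw_{-}^*$) and the exact surrogate minimizer (your $\Delta\vw^{**}$, the paper's $\Delta\tilde\vw^*$), with term $(1)$ from Lemma~\ref{lemma:fedavg_progress}, term $(2)$ from the single-machine SGD bound---which the paper already has available as Lemma~\ref{lemma:SGD_progress}, so no re-derivation is actually needed---and term $(3)$ from a matrix-inverse perturbation bound, where your exact resolvent identity $H_s^{-1}-(H^{-})^{-1}=H_s^{-1}(H^{-}-H_s)(H^{-})^{-1}$ is, if anything, a cleaner form of the inequality the paper invokes. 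The subtlety you flag about needing $\nabla\gL(\hat\vw;\gD)=0$ for the exact-removal identity is likewise present (and glossed) in the paper, which sidesteps it by simply defining $\Delta\vw^*$ as the minimizer of $\gF(\vv;\gD^{-},\gD_{c})$.
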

The proof is deferred to supplementary material C5.
To see the bound at a high level, let $\vw_{-}^*$ be the optimal weight based on the remaining dataset and $\Delta \vw^* = \vw_{-}^* - \vw^*$,
the difference can be decomposed into
\begin{align*}
&\sE\|\hat \vw_{c}^{-} - \hat\vw^{r}_{c}\|^2 \\
\leq & 2 \underbrace{\sE\|\hat \vw_{c}^{r} - \vw_{-}^*\|^2 + 2\sE\|\hat \vw - \vw^*\|}_{T_1} + 
\underbrace{2\sE\|\Delta \vw - \Delta \vw^*\|^2}_{T_2}\text{\revision{.}}
\end{align*}
where $T_1$ corresponds to the error from the FedAvg algorithm and is bounded by $(1)$, $T_2$ corresponds to two terms: a) the error from SGD algorithm to compute the $\Delta \vw$ and is bounded by $(2)$, 
b) the error term induced by Hessian approximation using the dataset from the server and is bounded by $(3)$.

With the IID assumption of the data, $\Delta G$ is small based on the law of large numbers.
A more representative sample from the server gives a non-singular gram matrix and larger smallest eigenvalue, hence leading to better reduction.

\subsection{Analysis}
In this section, we discuss the communication overhead and storage costs associated with \ours.

\begin{itemize}[leftmargin=*]
    \item \textbf{Communication overhead.} 
    During unlearning, the \remove{} process is applied solely on the server side, following~\eqref{eq:forgetting_surrogate}. It only requires the gradients from the last epoch of all clients, which are uploaded to the server as part of the standard FedAvg algorithm. Unlike other FU algorithms, \ours{} does not involve additional retraining or fine-tuning, resulting in zero communication overhead.

    \item \textbf{Historical model storage.} In \train{}, no historical models are stored at any point, nor are they used by \remove{}, resulting in zero model storage costs.
\end{itemize}
\section{Experiments}
\label{exp}
In this section, we present experiments to demonstrate the performance of \ours{} across different datasets, model architectures, and various settings. Sec.~\ref{exp:metrics} introduces the FU metrics, followed by Sec.~\ref{exp:data}, which describes the \textit{six} datasets used. Sec.~\ref{exp:fl_scenario} outlines the \textit{two} FL setups, including FL with FM. Sec.~\ref{exp:server_rate} details the experiment reproducibility. In Sec.~\ref{exp:baseline}, we compare \ours{} with baseline methods. Sec.~\ref{exp:regularization} investigates a key hyperparameter, regularization strength, in \ours{}. Finally, Sec.~\ref{exp:clients} verifies the stability of \ours{} across different numbers of clients and varying target client ratios.

\subsection{FU metrics}
\label{exp:metrics}
We follow the metrics commonly used in existing FU studies to use \textit{backdoor attacks (BA)} to validate the effectiveness of unlearning.\footnote{An alternative metric is \textit{membership inference attacks (MIA)}, which attempts to identify whether a given data point was part of the training set by analyzing model properties such as logits or gradients. 
However, the success of MIA is highly dependent on the tendency of the model to overfit the training set~\citep{shokri2017membership}.
We have conducted 27 MIA experiments across different datasets, MIA strategies, and numbers of clients. We find that our model from \train{} generalizes well on the test set, hence MIA is not an appropriate metric.}
BA is a security threat where triggers (such as patches) are inserted into training data, known as \textit{poisoned data}. 
This causes the model to misclassify inputs that contain the trigger while maintaining normal performance on clean data. 
Unlearning is used after BA to remove poisoned data information from the learned model. The effectiveness of a BA is evaluated using two metrics: \textit{Backdoor Success Rate (BSR)} and \textit{Test Accuracy (TA)}.

\begin{itemize}[leftmargin=*]
    \item \textbf{BSR} measures the proportion of data where the model is manipulated to misclassify poisoned data to the attacker's target label. A higher BSR indicates a more successful BA. In FU, we expect a high BSR after the FL model is trained with poison data from malicious client, and a BSR close to random guessing after unlearning this client.
    \item \textbf{TA} measures the accuracy of server model on clean test data. A higher TA indicates better performance on un-poisoned test data. In FU, we expect the TA of our method after removing the impact of the malicious client to be close to that achieved by retraining the FL model from scratch without the malicious client.
\end{itemize}

The optimal FU outcome would significantly reduce the BSR to the level of random guessing, demonstrating effective mitigation of the backdoor, while keeping TA close to the model's performance prior to the attack.

\subsection{Datasets}
\label{exp:data}
To evaluate the effectiveness of \ours{}, we extensively perform experiments on \textbf{six} datasets, including the \texttt{MNIST}~\citep{lecun1998gradient}, \texttt{FashionMNIST}~\citep{xiao2017fashion}, \texttt{CIFAR10}~\citep{krizhevsky2009learning}, \texttt{ImageNet}~\citep{deng2009imagenet}, \texttt{DomainNet}~\citep{peng2019moment}, and \texttt{Flowers} datasets. 
Note that most of the existing FU studies only adopt \texttt{MNIST}, \texttt{FashionMNIST}, and \texttt{CIFAR10}~\citep{liu2021federaser,halimi2022federated}, whose tasks are relatively simpler. 
We also include more complex and larger scale datasets, \texttt{ImageNet}, \texttt{DomainNet}, and \texttt{Flowers} in our study. 
These six datasets are introduced in supplementary material~\ref{app:data}.

\subsection{Different FL Scenarios}
\label{exp:fl_scenario}
Our paper studies two FL scenarios: FL for full parameter updating and FL for the foundation model with linear probing (FM-LP).

\begin{itemize}[leftmargin=*]
\item \textbf{FL for full parameter updating}. For \texttt{MNIST} and \texttt{FashionMNIST} datasets, we use a three-layer fully connected architecture with hidden layer dimensions of $500$ and $100$, which is detailed in supplementary material~\ref{app:model}. 
We randomly sample 10\% images as the holdout public data to pre-train the neural networks. 
For \texttt{CIFAR10} dataset, we adjust the architecture from PyTorch's official classification tutorial and pre-train it on ImageNet~\footnote{https://pytorch.org/tutorials/beginner/blitz/cifar10\_tutorial.html}. 

\item \textbf{FL for FM-LP}. We also explore the use of our methods for larger datasets and more complex models, such as FMs. FMs are pretrained on vast and diverse datasets and the pre-trained FM encoders are used to extract the embedding of the data. Our paper attempts to use this embedding to train a linear head to perform classification~\citep{tsouvalas2023federated}. 
We include CLIP~\citep{radford2021learning} and BLIP2~\citep{li2023blip} as FM backbones, as they are among the most popular FMs in the vision-language domain. We adapt these models to explore the potential of \ours{} within their vision encoders. CLIP is applied to \texttt{ImageNet} and \texttt{DomainNet}, while BLIP2 is used on \texttt{Flowers}. 
\end{itemize}

\subsection{Experiment Details} 
\label{exp:server_rate}
In \train{}, FedAvg\citep{mcmahan2017communication} is used as the basic FL aggregation strategy like existing FU studies. We simulate the BA by randomly selecting one client and poisoning all data on this client as follows. 
For the \texttt{MNIST} dataset, a white patch of size $5 \times 5$ is used as the trigger. 
For \texttt{FashionMNIST}, the trigger is a white patch of size $7 \times 7$. 
In the case of \texttt{CIFAR10}, we adapt the trigger from~\cite{saha2020hidden} and resize it to $10 \times 10$. 
The trigger is positioned at the bottom-right corner of the image. 
All poisoned images (the image with the trigger) have their label flipped to first class. 
For training, the Adam optimizer is used with a learning rate of $10^{-4}$ for the three datasets. 
\texttt{MNIST} is optimized for $400$ epochs, while \texttt{FashionMNIST} and \texttt{CIFAR10} are optimized for $800$ epochs to fully inject the backdoor data effect. 
After \train{}, all clients upload their last epoch gradient to the server.
We then unlearn the information from the poisoned client (i.e., target client) after \train{} for demonstration. 

For baselines in Sec.~\ref{exp:baseline}, we use the same local update steps and global communication rounds for all FL methods and individually tune key hyperparameters, such as the learning rate, for each method. 
Subsequently, we present the impact of key parameters on \ours{} in Sec.~\ref{exp:regularization}. 

\subsection{Comparison to Baselines}
\label{exp:baseline}
We perform \textit{FL for full parameter updating} experiments on \texttt{MNIST}, \texttt{FashionMNIST}, and \texttt{CIFAR10}, the commonly used datasets in existing FU studies~\citep{liu2021federaser,halimi2022federated,nguyen2024empirical}. 
In addition, we perform \textit{FL for FM-LP} on \texttt{ImageNet} (sampled), \texttt{DomainNet}, and \texttt{Flowers}, which is the very first setup scenario in our paper. 
All experiments are compared with existing baselines.
The raw data of the figures are presented in Table~\ref{tab:raw-mnist}-Table~\ref{tab:raw-flowers} in supplementary material.

\paragraph{FL for full parameter updating} Fig.~\ref{fig:baseline} compares the existing popular and publicly reproducible FU strategies, where they are introduced in supplementary material section~\ref{related} (FedEraser, EWSGA, SGA, Flipping, and CF). 
Fig.~\ref{fig:baseline} (a)-(c) visualizes the results on \texttt{MNIST}, \texttt{FashionMNIST}, and \texttt{CIFAR10} respectively. 
The red and blue bars represent TA and BSR for each method while the horizontal line represents the performance of the re-trained model. 
The purple hatched bars represent the gap between each baseline and the retrained model.
\textbf{The shorter the purple hatched bar, the better the performance of the method.}
\begin{figure*}[htbp]
\centering
\includegraphics[width=0.85\textwidth]{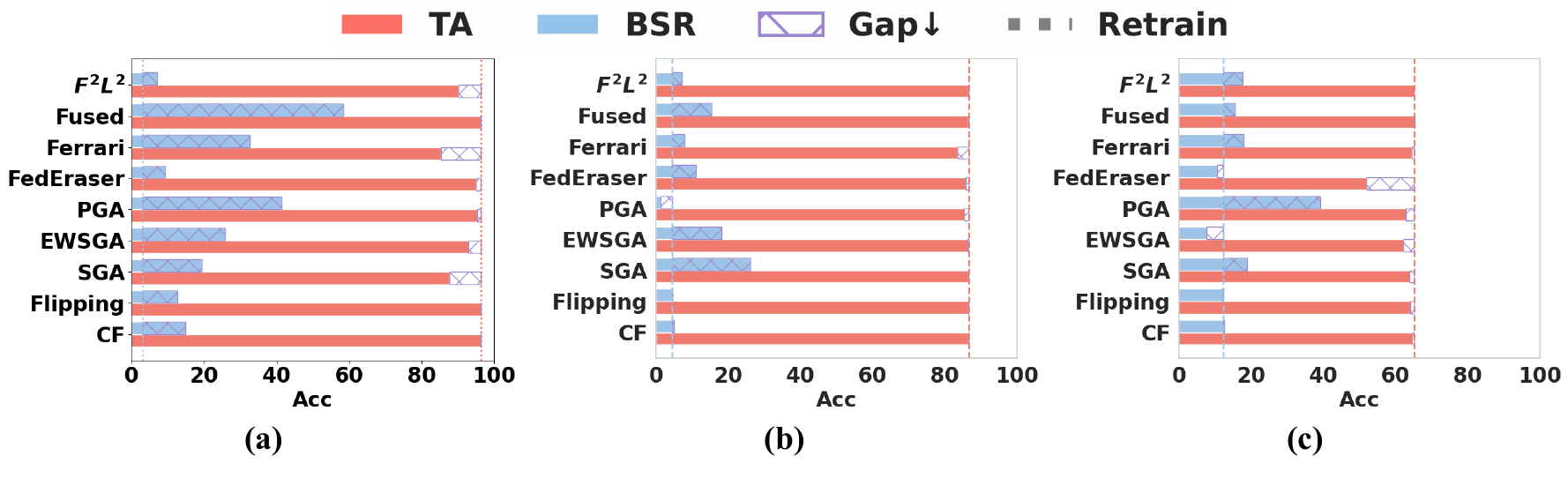}
\captionsetup{justification=raggedright}
\caption{\revision{\textbf{Comparison to baseline FU strategies listed in Tab.~\ref{tab:baseline} for (a) \texttt{MNIST}; (b) \texttt{Fashion-MNIST} and (c) \texttt{CIFAR-10}.} 
The red and blue dashed lines indicate the TA and BSR of the retrained model, respectively. The red and blue bars represent the TA and BSR of each method, while the purple hatched bars highlight the gap between each method and retraining from scratch.\protect\footnote{We only include popular and publicly reproducible FU strategies, which align with recent empirical benchmark FU studies~\cite{nguyen2024empirical}.}}}
\label{fig:baseline}
\end{figure*}
\begin{figure*}[t]
\centering
\includegraphics[width=0.85\textwidth]{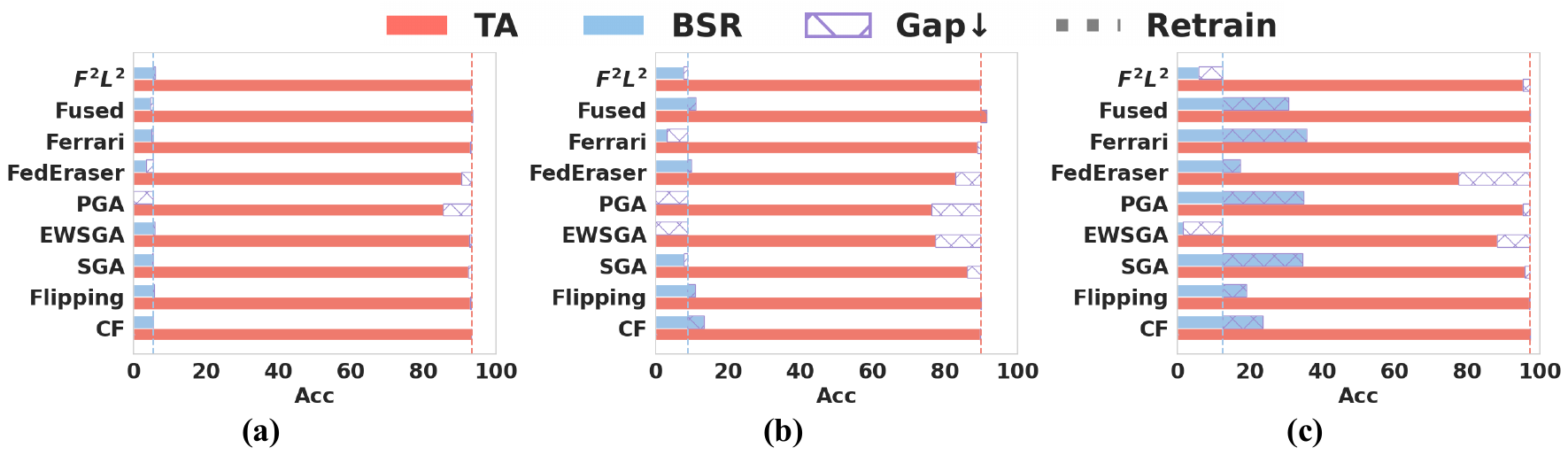}
\captionsetup{justification=raggedright}
\caption{\textbf{Baseline FM comparisions for (a) \texttt{ImageNet} (sampled); (b) \texttt{DomainNet} and (c) \texttt{Flowers}.} 
The red and blue dashed lines indicate the TA and BSR of the retrained model, respectively. The red and blue bars represent the TA and BSR of each method, while the purple-hatched bars highlight the gap between each baseline.}
\label{fig:baseline_fm}
\end{figure*}
As shown in Fig.~\ref{fig:baseline}, across all three datasets and scenarios, \ours{} consistently shows the small gaps in both TA and BSR, nearly matching the retrained model's performance. 
This indicates superior unlearning performance in removing BA while minimizing the impact on TA. 
EWSGA and SGA demonstrate moderate performance, while Flipping and CF exhibit the largest gaps, making them less effective at unlearning. Although EWSGA and SGA are reasonably effective, they do not achieve the optimal performance of \ours{}. FUSED and Ferrari show strong unlearning performance on \texttt{FashionMNIST} and \texttt{CIFAR-10}, but are less consistent on \texttt{MNIST} across multiple runs. Flipping and CF, while exhibiting a smaller performance gap, incur substantial computational costs comparable to retraining.

It is also worth noting that existing FU methods, such as FedEraser, often require substantial memory to store historical models, while methods like EWSGA demand additional communication during the unlearning process, making them inefficient as model sizes grow. 
\textit{In contrast, \ours{} neither requires extra memory nor additional communication during unlearning to achieve the ground truth performance of retraining.}
In summary, \ours{} outperforms existing methods by maintaining a high TA while significantly reducing the BSR after unlearning.

\paragraph{FL for FM-LP} Fig.~\ref{fig:baseline_fm} compares the same baselines under the FM-LP implementation. Fig.~\ref{fig:baseline_fm} (a)-(c) visualizes the results on \texttt{ImageNet} (sampled), \texttt{DomainNet}, and \texttt{Flowers}, respectively. As shown, leveraging embeddings from FMs enables TA to reach as high as 95\% on \texttt{Flowers}, and around 90\% on \texttt{ImageNet} and \texttt{DomainNet}.
The BA is also very high after the backdoor attack, making them suitable for validating FU, with BSRs reaching up to $94\%$ for \texttt{Flowers} and around $70\%-80\%$ for \texttt{ImageNet} and \texttt{DomainNet}. In contrast, the gold-standard \textit{retrained} model achieves a test accuracy similar to the pre-FU models but reduces the BSR to as low as $5\%-10\%$, approximating random guessing.

We then apply these baseline FU strategies to unlearn the backdoored client. \ours{} consistently shows the small gaps among all groups: the TA and BSR of \ours{} remain very close to the \textit{retrained} model, with a TA around $90\%$ and the BSR reduced to the level of random guessing. In contrast, SGA-based methods yield the largest gaps in BSR on \texttt{ImageNet} (sampled) and \texttt{Flowers}. FedEraser, Flipping, and CF show larger BSR gaps compared to the \textit{retrained} group after FU. Moreover, \ours{} operates entirely on the server and does not require additional communication, unlike these baselines. These results suggest that \ours{} demonstrates strong performance in unlearning in the era of FMs.

\subsection{Effect of Regularization Strength}
\label{exp:regularization}
\revision{The regularization term in Eq.~\eqref{eq:forgetting_surrogate} plays a pivotal role in ensuring that the optimal weight in Eq.~\eqref{eq:global_optimal} is well-defined.
Its magnitude, however, requires careful tuning: if too large, the regularization term can dominate the loss function and hinder effective unlearning, while if too small, the Hessian becomes ill-defined, potentially degrading performance. 
To empirically study this trade-off, we conduct experiments to examine how the regularization strength influences the performance of our proposed \ours{} method. 
The results, shown in Fig.~\ref{fig:reg}, reveal notable differences across datasets.}


\begin{figure}[htbp]
\centering
\includegraphics[width=\linewidth]{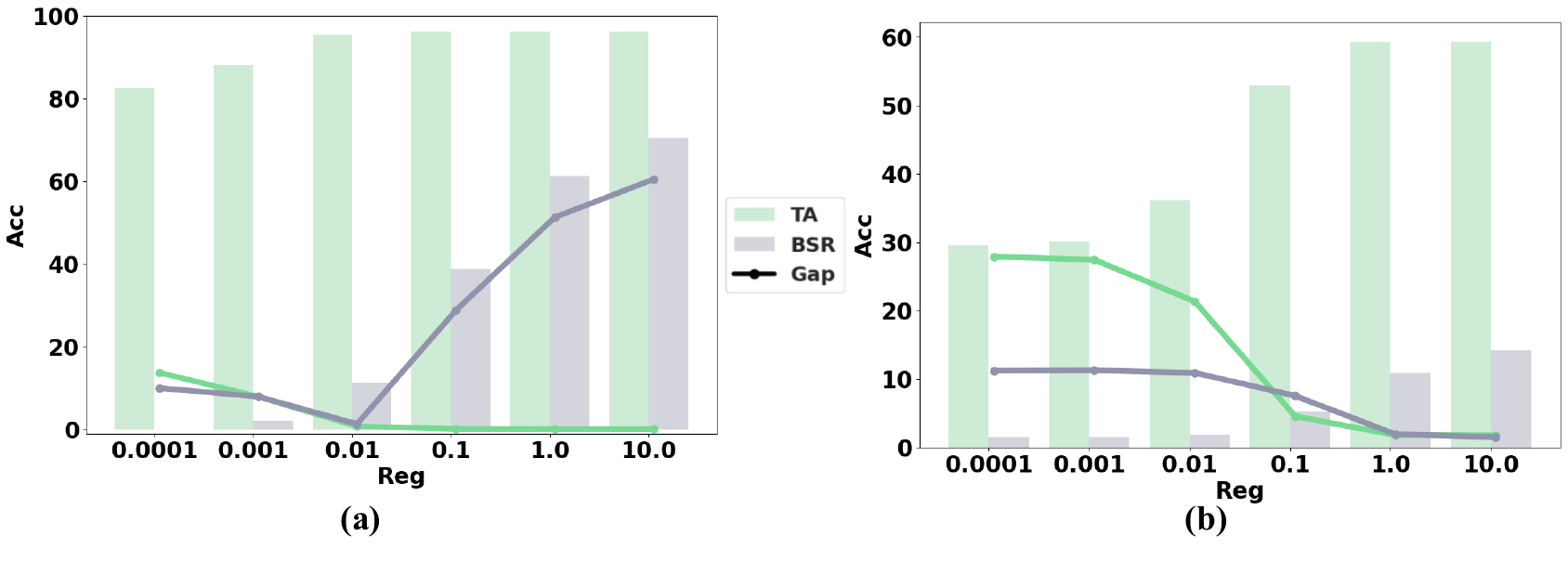}
\captionsetup{justification=raggedright}
\caption{\textbf{The effect of regularization strength for (a) \texttt{MNIST} and (b) \texttt{CIFAR-10} during \remove.} The curve visualizes the \textit{gap} between \remove{} and \textit{retrained} model's TA/BSR.}
\label{fig:reg}
\end{figure}
\footnotetext{The TA before unlearning and after retraining for MNIST is $96.13\%$ and $96.08\%$, which visually overlaps both lines in subfig. (a).}

\revision{For the \texttt{MNIST} dataset (Figure~\ref{fig:reg}a), increasing the regularization strength ($\mu$) from $1 \times 10^{-4}$ to $10$ leaves the TA near-perfect, while the gap between TA and the unlearning measure BSR is minimized at $\mu = 0.01$, indicating optimal performance. 
Beyond this point, BSR sharply increases, widening the gap and suggesting that an excessively large $\mu$ causes the regularization term to dominate, thereby impairing unlearning.

In contrast, the \texttt{CIFAR-10} dataset (Figure~\ref{fig:reg}b) exhibits a different pattern. 
Here, \ours{} remains close to the retrained model for larger values of $\mu$ (e.g., $\mu \geq 1$), showing robustness when the regularization is strong. 
However, the method is more sensitive to smaller $\mu$, indicating that for this more complex dataset, a relatively larger regularization is beneficial.

In summary, these findings indicate that the regularization strength $\mu$ must be carefully tuned. 
It must be large enough to maintain the strong convexity of the objective in Eq.~\eqref{eq:forgetting_surrogate} but not so large that it dominates and diminishes the influence of other critical terms, with the optimal balance differing between simpler and more complex datasets.
}

\subsection{Stability over Scalable Number of Clients}
\label{exp:clients}
One challenge in FL is maintaining the stability of the algorithm as the number of clients scales. 
To investigate this, we examine the effectiveness of \ours{} with 5 to 20 clients on the \texttt{MNIST} dataset
\footnote{As the number of clients increases, the effectiveness of the BA diminishes because the gradient contribution of the poisoned client is diluted when averaged across a larger number of clients. 
We choose \texttt{MNIST} for this study because the BA is most successful against it across all client configurations tested. 
This makes the impact of unlearning more easier to observe.}.
By scaling up the number of clients, we enable a comprehensive evaluation of \remove{}'s efficacy under varied client unlearning scenarios, with particular attention to cases involving a small unlearning ratio (the proportion of data to be unlearned).

Overall, \ours{} demonstrates stability and effectiveness across different numbers of clients, maintaining high accuracy while effectively unlearning the backdoor, regardless of client count.
In Fig.~\ref{fig:Clients}, subfigures (a) and (b) present the TA and BSR, respectively, for client counts of 5, 10, 15, and 20. 
\begin{figure}[htbp]
\centering
\includegraphics[width=0.99\linewidth]{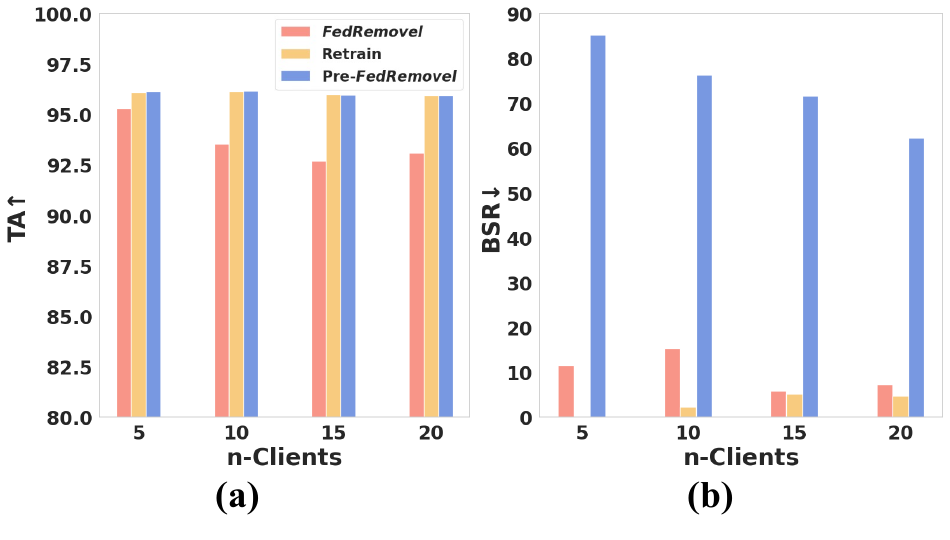}	
\captionsetup{justification=raggedright}
\caption{\textbf{\remove{}'s performance on the different number of clients for \texttt{MNIST}}. (a) presents TA and (b) represents BSR.}
 \label{fig:Clients}
\end{figure}
In each scenario, one client is randomly selected to inject the backdoor data and later undergoes the unlearning process. The blue bar represents the pre-\remove{} scenario, the pink bar shows the results after applying \remove{}, and the orange bar represents the performance of a \textit{retrained} model.
Across all client counts, TA and BSR reach nearly $100\%$ immediately after training, indicating a successful BA on the server model. 
For unlearning, \remove{} maintains a high TA, close to that of the retrained model in all cases. 
Specifically, for 5 clients, \remove{} achieves around $95\%$ TA, while the BSR drops significantly, reflecting effective unlearning. As the number of clients increases to 10, 15, and 20, \remove{} performs consistently well, with almost no decrease in TA and a BSR reduced to the level of random guessing. 
This consistent performance underscores the robustness of \ours{} across various client configurations, making it a reliable method for FU.

\subsection{Computational Efficiency}
\label{sec:efficiency}
\begin{table}[t]
  \centering
  \footnotesize
  \setlength{\tabcolsep}{3pt}
  \caption{\textbf{Per-epoch computational cost (server-side ``w/ Approx.''). standard error.}}
  \label{tab:efficiency}
  \begin{tabular}{@{}lclrrr@{}}
    \toprule
    Dataset & Approx & Op calls & FLOPs\,(1e9) & Thrp.\,(s/s) & Comm. \\
    \midrule
    \multirow{2}{*}{MNIST}        & w/o & 2,325,664 &  30.10 & 1.20 & $O(n)$ \\
                                  & w/  &   322,281 &   4.19 & 0.20 & 0   \\[2pt]
    \multirow{2}{*}{FashionMNIST} & w/o & 2,325,664 &  30.10 & 1.20 & $O(n)$ \\
                                  & w/  &   322,281 &   4.19 & 0.20 & 0   \\[2pt]
    \multirow{2}{*}{CIFAR10}      & w/o & 3,485,004 & 205.10 & 0.10 & $O(n)$ \\
                                  & w/  &   483,549 &  28.51 & 0.08 & 0   \\
    \bottomrule
  \end{tabular}
\end{table}

The Hessian approximation enables the entire \ours{} unlearning
procedure to run \emph{solely on the server}.  As a result  
(i) no messages are exchanged beyond the final-epoch gradients already
required by FedAvg, and  
(ii) no client checkpoints are retained—yielding \textbf{zero extra
communication} and \textbf{zero model-storage overhead}.

Table \ref{tab:efficiency} presents the one epoch of \ours{} \textbf{with}
(\,``w/ Approx.''\,) and \textbf{without} (\,``w/o Approx.''\,) the
approximation on three datasets using the same model as in
Sec.\ref{exp}. As shown, the approximation significantly reduces both computation and communication across all datasets. Since the residual computation is performed entirely on the server, client-side throughput remains unaffected, and communication overhead stays at zero in the ``w/ Approx.'' group. The lower FLOPs and throughput values simply reflect the smaller unlearning workload handled by the server. Overall, the approximation provides substantial efficiency gains while preserving the core strength of \ours{}: fully server-side unlearning with no additional communication or storage cost.

\section{Conclusion}
Motivated by the mechanism of machine unlearning under quadratic loss, this paper presents a novel FU framework, \ours{}, that removes the need for access to the target client and retaining clients, storage of historical models, or additional communication. By leveraging a linear approximation in parameter space through \train{} and introducing \remove{} for efficient client forgetting, \ours{} achieves certified unlearning with theoretical guarantees that bound its deviation from full retraining. We validate its effectiveness across a range of datasets and demonstrate its adaptability to FMs such as CLIP within FL.

This work opens several promising directions for future research, including extending \ours{} to broader FL settings beyond classification, applying it to generative and structured prediction tasks, exploring nonlinear approximations for greater expressiveness, and improving robustness in the presence of complex client heterogeneity. We also envision deeper integration of \ours{} with emerging FMs and scaling its application to more diverse, real-world federated environments.

\bibliographystyle{IEEEtran}
\bibliography{biblio}

\renewcommand{\appendixname}{Supplementary Material}
\clearpage
\newpage
\appendix
\subsection{Notation Table}
A summary of notations used in the main paper is given in Tab.~\ref{tab:notation}.

\label{app:notation}
\begin{table}[ht]
\caption{Important notations}
\resizebox{\linewidth}{!}{%
\centering
\begin{tabular}{cl}
\toprule
Notations & Description \\ 
\midrule
$\gX$ & feature Space \\
$\gY$ & label Space \\
$K$ & number of classes \\
$C$ & set of clients \\
$\vf : \gX \to \sR^{K}$ & classification model \\
$d$ & input dimension \\
$R$ & global epochs \\
$\vw$ & model weight \\
$\vw^p$ & pre-trained model weight \\
$\vw_s^r$ & server model at round $r$ \\
$\theta$ & FM encoder \\
$\phi$ & Linear layer \\
$\gL$ & loss function \\
$\gD$ & full train set \\
$\gD_p$ & pre-trained data \\ 
$\gD_c$ & dataset on $c$-th client \\ 
$\gD_f$ & dataset on forgetten client \\
$\gD^-$ & union of dataset for retain client(s) \\
$\gD_\texttt{test}$ & test set \\
$M$ & local steps \\
$\gR$ & risk objective \\
$\mu$ & regularization coefficient \\
$\eta$ & step size \\
$p_c$ & aggregation weight of client $c$ \\
$\sigma^2$ & upper bound variance of stochastic gradient \\
$g_c^{r,m}$ & gradient for client $c$, global epoch $r$ and local step $m$ \\
$n_p$ & number of pre-trained data \\
\bottomrule
\end{tabular}%
}
\label{tab:notation}
\end{table}

\subsection{Related Work}
\label{related}
\newif\ifcondition
\conditiontrue 

\ifcondition
Federated Unlearning (FU) has emerged as a critical research area in response to growing privacy concerns and regulatory requirements~\citep{parisi2019continual,liu2021federaser,li2021anti,nguyen2022survey,mercuri2022introduction,jin2023forgettable,gogineni2024efficient,nguyen2024empirical}. 
Recent studies have explored various strategies for removing client data from federated models while maintaining model utility~\citep{nguyen2022survey, nguyen2024empirical}. 
An ideal FU method should satisfy two key requirements: (1) minimize dependence on both target and retained clients during the unlearning process, and (2) eliminate the need for storing historical models to reduce storage overhead.

As demonstrated in Table~\ref{tab:baseline}, existing approaches fail to simultaneously meet all these criteria. 
We systematically categorize current FU methods into five paradigms based on their underlying unlearning strategies:
\begin{table}[htbp]
\centering
\caption{\textbf{Comparison between \ours{} (Ours) and existing FU strategies.}\protect \footnotemark}
\label{tab:baseline}
\resizebox{0.95\linewidth}{!}{
\begin{tabular}{lccccc}
\toprule
Baseline & 
\makecell{No extra \\  commu.} & 
\makecell{No target \\ client(s) \\ involve} & 
\makecell{No retain \\ client(s) \\ involve} & 
\makecell{No historical \\ model \\ storage} & 
\makecell{Certified\\ unlearn} \\ 
\midrule
Retrain~\citep{mercuri2022introduction} & \textcolor{red}{\faTimes} & \textcolor{green}{\faCheck} & \textcolor{red}{\faTimes} & \textcolor{green}{\faCheck} & N/A \\ 
\midrule
CF~\citep{parisi2019continual} & \textcolor{red}{\faTimes} & \textcolor{red}{\faTimes} & \textcolor{red}{\faTimes} & \textcolor{green}{\faCheck} & \textcolor{red}{\faTimes} \\ 
\midrule
Flipping~\citep{nguyen2024empirical} & \textcolor{red}{\faTimes} & \textcolor{red}{\faTimes} & \textcolor{green}{\faCheck} & \textcolor{green}{\faCheck} & \textcolor{red}{\faTimes} \\ 
\midrule
SGA~\citep{li2021anti} & \textcolor{red}{\faTimes} & \textcolor{red}{\faTimes} & \textcolor{green}{\faCheck} & \textcolor{green}{\faCheck} & \textcolor{red}{\faTimes} \\ 
\midrule
EW-SGA~\citep{wu2022federated} & \textcolor{red}{\faTimes} & \textcolor{red}{\faTimes} & \textcolor{green}{\faCheck} & \textcolor{green}{\faCheck} & \textcolor{red}{\faTimes} \\ 
\midrule
PGD~\citep{halimi2022federated} & \textcolor{red}{\faTimes} & \textcolor{red}{\faTimes} & \textcolor{red}{\faTimes} & \textcolor{green}{\faCheck} & \textcolor{red}{\faTimes} \\ 
\midrule
KNOT~\citep{su2023asynchronous} & \textcolor{red}{\faTimes} & \textcolor{green}{\faCheck} & \textcolor{red}{\faTimes} & \textcolor{red}{\faTimes} & \textcolor{red}{\faTimes} \\ 
\midrule
FedEraser~\citep{liu2021federaser} & \textcolor{red}{\faTimes} & \textcolor{green}{\faCheck} & \textcolor{red}{\faTimes} & \textcolor{red}{\faTimes} & \textcolor{red}{\faTimes} \\ 
\midrule
Starfish~\citep{liu2024privacy} & \textcolor{red}{\faTimes} & \textcolor{green}{\faCheck} & \textcolor{red}{\faTimes} & \textcolor{red}{\faTimes} & \textcolor{green}{\faCheck} \\ 
\midrule
FUSED~\citep{zhong2025unlearning} & \textcolor{red}{\faTimes} & \textcolor{green}{\faCheck} & \textcolor{red}{\faTimes} & \textcolor{green}{\faCheck} & \textcolor{red}{\faTimes} \\ 
\midrule
FedRecovery~\citep{zhang2023fedrecovery} & \textcolor{green}{\faCheck} & \textcolor{green}{\faCheck} & \textcolor{green}{\faCheck} & \textcolor{red}{\faTimes} & \textcolor{red}{\faTimes} \\ 
\midrule
Ferrari~\citep{gu2024ferrari} & \textcolor{red}{\faTimes} & \textcolor{red}{\faTimes} & \textcolor{green}{\faCheck} & \textcolor{red}{\faTimes} & \textcolor{green}{\faCheck} \\ 
\midrule
\textbf{\ours{} (Ours)} & \textcolor{green}{\faCheck} & \textcolor{green}{\faCheck} & \textcolor{green}{\faCheck} & \textcolor{green}{\faCheck} & \textcolor{green}{\faCheck} \\ 
\bottomrule
\end{tabular}}
\end{table}

\begin{enumerate}[leftmargin=*]
\item \textbf{Retraining-based approaches}. 
The most straightforward FU method involves complete retraining of the model using only data from retained clients~\citep{mercuri2022introduction}. 
While this approach guarantees complete unlearning by definition, it suffers from prohibitive computational and communication costs due to requiring full participation from all retained clients. 
The resource-intensive nature of this method makes it impractical for large-scale federated learning systems.

\item \textbf{Fine-tuning-based methods}.
These approaches leverage the phenomenon of catastrophic forgetting in deep neural networks, where models tend to lose previously learned information when trained on new tasks or data. 
\cite{parisi2019continual} and~\cite{jia2023model} demonstrated that targeted fine-tuning on retained data can effectively induce forgetting of target data while preserving performance on the retained dataset. 
\revision{Recent advances in this paradigm include FUSED~\citep{zhong2025unlearning}, which employs parameter-efficient adapters to reduce computational overhead, and Ferrari~\citep{gu2024ferrari}, which incorporates Lipschitz-constrained loss functions for selective feature unlearning. 
Despite their advantages, these methods still require coordination among remaining clients, resulting in significant communication overhead.}


\item \textbf{Flipping-based techniques}
This approach induces forgetting by exploiting deep neural networks' tendency to memorize random labelings of training data~\citep{zhang2021understanding}. The process involves three key steps: (1) the target client relabels its local data randomly, (2) performs multiple training iterations on these corrupted labels, and (3) transmits the resulting model to the server for aggregation~\citep{nguyen2024empirical,gogineni2024efficient}. While effective at removing target data influence, this method presents two significant drawbacks. First, it requires additional communication rounds between server and target client, increasing overhead. More critically, it creates security vulnerabilities by depending entirely on the target client--who may submit a maliciously altered model during the unlearning process.

\item \textbf{Gradient ascent-based approaches}.
These methods achieve unlearning by reversing the optimization process. 
While basic stochastic gradient ascent (SGA)~\citep{li2021anti} can remove target data, they require careful hyperparameter tuning (e.g., learning rate, epochs) to avoid suboptimal unlearning. 
Poorly adjusted SGA may degrade model performance beyond the intended level—even below random guessing. 
To mitigate this,~\cite{wu2022federated} integrates elastic weight consolidation (EWC) to automatically stabilize the SGA process, while \cite{halimi2022federated} employs projected gradient descent (PGD) with gradient clipping and normalization for more reliable optimization. 
A key limitation of SGA-based methods is their dependence on the target client’s cooperation, which introduces risks if the client behaves maliciously during unlearning.

\item \textbf{Historical model-based methods}.
\cite{liu2021federaser} introduced FedEraser, which calibrates retained model updates using historical client weights to accelerate unlearning. 
However, this approach demands storing global and local models at every FL epoch, leading to high storage costs. 
Subsequent works, such as \cite{zhang2023fedrecovery}, extend FedEraser by incorporating clustering for asynchronous unlearning scenarios. 
FedEraser’s reliance on full historical model storage becomes increasingly impractical as model sizes grow, and its calibration process also introduces additional communication overhead.
\end{enumerate}

To summarize, existing methods either require the cooperation of target or retained clients during the unlearning process, or additional storage for historical models, which may be impractical in real-world scenarios, as discussed in Sec.~\ref{sec:intro}. 
Furthermore, these methods may not ensure that the model after unlearning performs comparably to one retrained from scratch. 
To address these limitations, we introduce \ours{}, a \textit{certified} FU algorithm that operates \textit{without client involvement, extra communication, or additional model storage}.

\footnotetext{We only include popular and publicly reproducible FU strategies, which aligns with recent empirical benchmark FU studies~\cite{nguyen2024empirical}. For FedEraser, its calibrated training version involves fine-tuning the calibrated client.}

\else
FU is an emerging field~\citep{parisi2019continual,liu2021federaser,li2021anti,nguyen2022survey,mercuri2022introduction,jin2023forgettable,gogineni2024efficient,nguyen2024empirical}. 
In FU, several factors are taken into consideration, including the certification of the algorithm, storage cost, and client involvement, as illustrated . 

Ideally, a FU algorithm should combine theoretical validation with empirical validation, providing the strongest guarantees of effective unlearning. 
From this perspective, it is beneficial for a FU algorithm to minimize reliance on client participation, as this reduces communication overhead during the unlearning process. Additionally, it is crucial that the FU process does not rely heavily on the target client, as such untrusted clients could exploit the unlearning process by executing malicious algorithms (e.g., poisoning attacks) and distributing compromised models to other clients during continued FL rounds. Finally, it is preferable for the unlearning process to avoid storing historical models, as this can significantly increase storage costs.  

We summarize existing FU algorithms in Tab.~\ref{tab:baseline} based on these considerations.

\begin{table*}[htbp]
\centering
\caption{\textbf{Comparison between \ours{} (Ours) and existing FU strategies.}\protect \footnotemark}
\label{tab:baseline}
\begin{tabular}{lccccc}
\toprule
\textbf{Baseline | Category} & 
\makecell{\textbf{No Extra} \\ \textbf{Communication}} & 
\makecell{\textbf{No Target} \\ \textbf{Client(s)} \\ \textbf{Involvement}} & 
\makecell{\textbf{No Retain} \\ \textbf{Client(s)} \\ \textbf{Involvement}} & 
\makecell{\textbf{No Historical} \\ \textbf{Model} \\ \textbf{Storage}} & 
\makecell{\textbf{Certified Unlearning}} \\ 
\midrule
\textbf{Retrain}~\citep{mercuri2022introduction} & \textcolor{red}{\faTimes} & \textcolor{green}{\faCheck} & \textcolor{red}{\faTimes} & \textcolor{green}{\faCheck} & N/A \\ 
\midrule
\textbf{CF}~\citep{parisi2019continual} & \textcolor{red}{\faTimes} & \textcolor{red}{\faTimes} & \textcolor{red}{\faTimes} & \textcolor{green}{\faCheck} & \textcolor{red}{\faTimes} \\ 
\midrule
\textbf{Flipping}~\citep{nguyen2024empirical} & \textcolor{red}{\faTimes} & \textcolor{red}{\faTimes} & \textcolor{green}{\faCheck} & \textcolor{green}{\faCheck} & \textcolor{red}{\faTimes} \\ 
\midrule
\textbf{SGA}~\citep{li2021anti} & \textcolor{red}{\faTimes} & \textcolor{red}{\faTimes} & \textcolor{green}{\faCheck} & \textcolor{green}{\faCheck} & \textcolor{red}{\faTimes} \\ 
\midrule
\textbf{EW-SGA}~\citep{wu2022federated} & \textcolor{red}{\faTimes} & \textcolor{red}{\faTimes} & \textcolor{green}{\faCheck} & \textcolor{green}{\faCheck} & \textcolor{red}{\faTimes} \\ 
\midrule
\textbf{PGD}~\citep{halimi2022federated} & \textcolor{red}{\faTimes} & \textcolor{red}{\faTimes} & \textcolor{red}{\faTimes} & \textcolor{green}{\faCheck} & \textcolor{red}{\faTimes} \\ 
\midrule
\textbf{KNOT}~\citep{su2023asynchronous} & \textcolor{red}{\faTimes} & \textcolor{green}{\faCheck} & \textcolor{red}{\faTimes} & \textcolor{red}{\faTimes} & \textcolor{red}{\faTimes} \\ 
\midrule
\textbf{FedEraser}~\citep{liu2021federaser} & \textcolor{red}{\faTimes} & \textcolor{green}{\faCheck} & \textcolor{red}{\faTimes} & \textcolor{red}{\faTimes} & \textcolor{red}{\faTimes} \\ 
\midrule
\textbf{Starfish}~\citep{liu2024privacy} & \textcolor{red}{\faTimes} & \textcolor{green}{\faCheck} & \textcolor{red}{\faTimes} & \textcolor{red}{\faTimes} & \textcolor{green}{\faCheck} \\ 
\midrule
\textbf{\textcolor{blue}{FUSED}}~\citep{zhong2025unlearning} & \textcolor{red}{\faTimes} & \textcolor{green}{\faCheck} & \textcolor{red}{\faTimes} & \textcolor{green}{\faCheck} & \textcolor{red}{\faTimes} \\ 
\midrule
\textbf{FedRecovery}~\citep{zhang2023fedrecovery} & \textcolor{green}{\faCheck} & \textcolor{green}{\faCheck} & \textcolor{green}{\faCheck} & \textcolor{red}{\faTimes} & \textcolor{red}{\faTimes} \\ 
\midrule
\textbf{\textcolor{blue}{Ferrari}}~\citep{gu2024ferrari} & \textcolor{red}{\faTimes} & \textcolor{red}{\faTimes} & \textcolor{green}{\faCheck} & \textcolor{red}{\faTimes} & \textcolor{green}{\faCheck} \\ 
\midrule
\textbf{\ours{} (Ours)} & \textcolor{green}{\faCheck} & \textcolor{green}{\faCheck} & \textcolor{green}{\faCheck} & \textcolor{green}{\faCheck} & \textcolor{green}{\faCheck} \\ 
\bottomrule
\end{tabular}
\end{table*}

\begin{itemize}
    \item \textbf{Retraining} from scratch after an unlearning request is a straightforward yet inefficient method for removing previously trained information from the model~\citep{mercuri2022introduction}. Although it eliminates the need for participation from the target client, retraining introduces the highest communication cost, as it requires the involvement of the remaining clients, making it the most resource-intensive approach among all FU strategies.

    \item \textbf{Fine-tuning-based unlearning}, also known as \textit{model reconstruction}, fine-tunes the pre-trained model on the retained dataset to obtain the unlearned model~\citep{jia2023model}. This approach leverages the catastrophic forgetting property of DNNs in continual learning to de-emphasize previously learned information~\citep{parisi2019continual}.  It is widely recognized as a straightforward defense mechanism against backdoor attacks~\citep{liu2017neural,li2022backdoor}. Some studies focus on integrating catastrophic forgetting with parameter-efficient fine-tuning. For example, Zhong et al.~\cite{zhong2025unlearning} fine-tune only the adapter to reduce communication costs and enable reversible unlearning in FU. Ferrari~\citep{gu2024ferrari} also fine-tunes the global model, but replaces the standard cross-entropy objective with a Lipschitz feature-sensitivity loss, enabling unlearning of specific features without any other-client participation. Finally, many existing FU algorithms~\citep{halimi2022federated,liu2021federaser} also plug in fine-tuning as calibration training to further enhance the performance of unlearning. Though fine-tuning does not involve the target client, it also requires the coordination among the remaining clients and thus incurs a high communication cost due to the nature of FL.

    \item \textbf{Flipping-based unlearning}~\cite{nguyen2024empirical, gogineni2024efficient} facilitates unlearning by confusing the model with random labels on the target client's dataset, leveraging the observation that DNNs tend to overfit on random labels~\citep{zhang2021understanding}.  Once a removal request is received, the target client trains a local model on the same data input but with randomized labels. After several iterations, the local model is sent to the server for aggregation, aiding the unlearning process. 
    Therefore, this approach incurs additional communication costs between the server and target clients.
    Moreover, relying solely on the target client for unlearning introduces potential security risks, such as the possibility of the client returning a malicious model after poisoning its training process. 

    \item \textbf{Gradient ascent-based unlearning} treats unlearning as the reverse learning process, achieved through stochastic gradient ascent (SGA). SGA is also a popular defense strategy for backdoor attacks~\citep{li2021anti}. However, the effectiveness of SGA requires meticulous adjustment of hyperparameters such as learning rate and number of epochs and has non-optimal optimization if those are not carefully tuned. Insufficient adjustment may result in the model's performance on the target set exceeding the intended level, such as performing worse than random guessing. To address this,~\cite{wu2022federated} proposes incorporating elastic weight consolidation (EWC) in the SGA process to automatically balance the SGA process. In addition,~\cite{halimi2022federated} introduces projected gradient descent (PGD), which takes advantage of gradient clipping and normalization in gradient ascent to provide better optimization. The SGA-based strategies also rely on the cooperation of the target client, which introduces new risks if the client engages in covert malicious behavior during the unlearning process.

    \item \textbf{FedEraser-based unlearning.}~\citep{liu2021federaser} proposes a calibration method that utilizes the historical updates of clients' parameters to adjust the retained updates, thereby accelerating the unlearning process. However, this approach requires storing both global and local models at every epoch in FL, which increases storage costs. Many FU algorithms are based on FedEraser, such as~\citep{zhang2023fedrecovery}, which further incorporates clustering to handle asynchronous scenarios in FU. FedEraser’s reliance on historical model storage for every global epoch makes it especially impractical if the model sizes are growing larger and larger nowadays, and it incurs additional communication overhead due to the need for calibrated training.
\end{itemize}

To summarize, existing methods either require the cooperation of target or retained clients during the unlearning process, or additional storage for historical models, which may be impractical in real-world scenarios, as discussed in Sec.~\ref{sec:intro}. Furthermore, these methods may not ensure that the model after unlearning performs comparably to one retrained from scratch. To address these limitations, we introduce \ours{}, a \textit{certified} FU algorithm that operates \textit{without client involvement, extra communication, or additional model storage}.

\footnotetext{We only include popular and publicly reproducible FU strategies, which aligns with recent empirical benchmark FU studies~\cite{nguyen2024empirical}. For FedEraser, its calibrated training version involves fine-tuning the calibrated client.}

\fi

\subsection{Weight Difference Bound}
\subsubsection{Notation and Definition}
We consider a $K$-class classification task with feature space $\gX\subset \sR^{d}$ and label space $\gY=[K]$ where $[K] = \{1,\ldots, K\}$.
Under the FL setting, the training dataset is not available on a single device and is partitioned over $C$ clients.
Let $\gD_{c} = \{(x_c^i, y_c^i)\}_{i=1}^{n_c}$ be the training set on the $c$-th client. 
We also have the pre-trained dataset $\gD_{p} = \{(x_p^i, y_p^i)\}_{i=1}^{n_p}$, which can consist of data from public domain or data provided by any of the FL participant.
These datasets are samples from the distribution with density function $p(\vx;y)$.
We denote by $\gD= \{\gD_{1},\ldots,\gD_{C}\}$ the full training set and $n=n_1+\ldots+n_C$.

Let $\vw_*^p$ be the model weight from pretraining a $K$-class classifier on $\gD_{p}$.
Let $\tilde{\vf}(\vx; \vw)=\vf(\vx; \vw_*^{p}) + \langle \nabla_{w} \vf(\vx; \vw_*^{p}),\vw-\vw_*^{p}\rangle$ be the linear approximation of $\vf(\vx;\vw)$.
We train $\tilde{\vf}$ by FedAvg on $\gD$.
The local loss 
\begin{equation*}
    \gL_{\text{MSE}}(\vw;\gD_{c}) = \frac{1}{2n_c} \sum_{i=1}^{n_c} \|\tilde\vf(\vx_c^i;\vw)-\vy_c^i\|^2 + \frac{\mu}{2}\|\vw\|^2.
\end{equation*}
is denoted as $\gL_{c}(\vw)$ for simplicity in the rest of the description and the global loss is
\[\gL(\vw;\gD) = \sum_{c=1}^{C} p_c\gL_{c}(\vw)\]
where $p_{c} = n_c/n$.
We denote
\[\vw_{c}^* = \argmin_{\vw} \gL_{c}(\vw)\]
\[\vw^* = \argmin_{\vw} \gL(\vw)\]

We first outline the FedAvg method in Algorithm~\ref{alg:learning}. 
In round $r$, we perform the following updates: 
\begin{itemize}
    \item Let $S_{c}^{r,m}$ be an index set that is sampled from $[n_c]$ without replacement for the $m$-th batch in $r$-th round on client $c$, and let the stochastic gradients based on mini-batch $S_{c}^{r,m}$
    \begin{equation}
    \label{eq:sgd_grad}
        g_{c}^{r,m}(\vw) = \frac{1}{|S_{c}^{r,m}|} \sum_{i\in S_{c}^{r,m}} \nabla \ell(\vf(\vx_c^i; \vw); y_c^i). 
    \end{equation}

    \item Starting from the shared global parameters $\vw_{c}^{r,0} = \vw_{s}^{r-1}$, we update the local parameters for $m\in[M]$
    \[\vw_{c}^{r,m} = \vw_{c}^{r,m-1} - \eta_{l} g_{c}^{r,m}(\vw_{c}^{r,m-1}).\]
    
    \item Compute the new global parameters via
    \[\vw_{s}^{r} = \sum_{c=1}^{C}p_{c}\vw_{c}^{r,M}\]
\end{itemize}

\begin{definition}[$\mu$-strongly convex]
A function $h(\cdot):\gH \subset \sR^{p} \to \sR$ is called $\mu$-strongly convex if $h(\vw) - \frac{\mu}{2}\|\vw\|^2$ is convex.
\end{definition}

\begin{definition}[$\beta$-smooth]
We say a continuously differentiable function $h(\cdot):\gH \subset \sR^{p} \to \sR$ is $\beta$-smooth if its gradient $\nabla h$ is
$\beta$-Lipschitz, that is
\[\|\nabla h(\vw) -\nabla h(\vv)\|\leq \beta \|\vw-\vv\|
\] 
for and $\vw, \vv \in \gH$.    
\end{definition}

\subsubsection{Conditions and assumptions}

\begin{condition}[Smoothness \& convexity]
The objective function $\gL_{c}(\vw)$ is $\mu$-strongly convex and $\beta$-smooth where 
$\beta = \max_{c\in[C]}\lambda_{\mathrm{max}}(\nabla^2 \gL_{c}(\vw))$ is the largest eigenvalue of all Hessian matrix over $C$ clients.
\end{condition}
Since our local objective function $\gL_{c}$ is the sum of two quadratic functions, the objective function is clearly $\mu$-strongly convex.
Sum of two quadratic functions is also $\beta$-smooth with 
\[\beta = \max_{c\in[C]}\lambda_{\mathrm{max}}(\nabla_{w} \vf(\gD_{c};\vw_*^{p})\nabla_{w}^{\top} \vf(\gD_{c};\vw_*^{p}) + \mu \mI)\]
being the largest eigenvalue of the Hessian of the local objective function.
Having a larger value of $\beta$ also guarantees the $\beta$-smoothness. 
We therefore, take the largest eigenvalue across all $C$ clients and use it as the constant for $\beta$-smoothness.

\begin{assumption}[Unbiased gradients with bounded variances]
\label{assump:bounded_variance}
Let $g_{c}^{r,m}(\vw)$ be as defined in~\eqref{eq:sgd_grad}, then $g_{c}^{r,m}(\vw)$ is an unbiased estimate for the local gradient $\nabla \gL_{c}(\vw)$.
Let $\vw^0$ be the initial value of the proposed FL algorithm for the training of the linear model $\tilde\vf$. 
Let $\gR:=\{\vw:\vw\in \sR^{p},\|\vw\|\leq \|\vw^0-\vw^*\|<\infty\}$ be a bounded region of weight parameter space.
We assume the unbiased stochastic gradients $g_{c}^{r,m}(\vw)$ have a bounded variance. That is $\exists~\sigma >0$ s.t.
\begin{equation}
\label{eq:bounded_grad_variance}
\sE_{S_{c}^{r,m}}\|g_{c}^{r,m}(\vw) - \nabla \gL_{c}(\vw)\|^2\leq \sigma^2    
\end{equation}
for any $\vw\in \gR$.
\end{assumption}

\subsubsection{Some technical lemmas}
\begin{lemma}[Relaxed triangle inequality]
\label{lemma:relaxed_triangle_inequality}
Let $\vw_1,\ldots, \vw_m$ be $m$ vectors in $\sR^{p}$. Then the following are true
\begin{enumerate}
    \item $\|\vw_i + \vw_j\|^2\leq (1+\alpha)\|\vw_i\|^2 + (1+\frac{1}{\alpha})\|\vw_j\|^2$ for any $\alpha>0$
    \item $\|\sum_{i=1}^{m}\vw_i\|^2\leq m\sum_{i=1}^{m}\|\vw_i\|^2$
\end{enumerate}
\end{lemma}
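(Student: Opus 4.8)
The plan is to prove both inequalities by elementary expansion of the squared Euclidean norm, followed by a single application of a weighted arithmetic–geometric mean bound (Young's inequality) in part~(1) and of the Cauchy–Schwarz inequality in part~(2). Neither statement requires any structure beyond the inner-product identity $\|\vu+\vv\|^2 = \|\vu\|^2 + 2\langle \vu,\vv\rangle + \|\vv\|^2$, so the entire argument is purely algebraic.

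For part~(1), I would first expand $\|\vw_i + \vw_j\|^2 = \|\vw_i\|^2 + 2\langle \vw_i,\vw_j\rangle + \|\vw_j\|^2$ and bound the cross term by Cauchy–Schwarz, giving $2\langle \vw_i,\vw_j\rangle \le 2\|\vw_i\|\,\|\vw_j\|$. The key step is then to invoke Young's inequality in the weighted form $2ab \le \alpha a^2 + \alpha^{-1} b^2$, which holds for every $\alpha>0$ since it is equivalent to the manifest nonnegativity $(\sqrt{\alpha}\,a - \alpha^{-1/2} b)^2 \ge 0$. Applying this with $a=\|\vw_i\|$ and $b=\|\vw_j\|$ and collecting terms produces exactly $(1+\alpha)\|\vw_i\|^2 + (1+\alpha^{-1})\|\vw_j\|^2$, as claimed.

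For part~(2), I would write the squared norm of the sum as the double sum $\|\sum_{i=1}^m \vw_i\|^2 = \sum_{i=1}^m\sum_{j=1}^m \langle \vw_i,\vw_j\rangle$ and bound each inner product by Cauchy–Schwarz; alternatively, and more cleanly, I would appeal to the convexity of $t\mapsto \|t\|^2$ through Jensen's inequality, which gives $\|m^{-1}\sum_i \vw_i\|^2 \le m^{-1}\sum_i \|\vw_i\|^2$, and then multiply through by $m^2$. Either route yields the claimed factor of $m$.

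I do not anticipate any genuine obstacle here, as the lemma is a standard technical tool; the only point requiring a moment's care is the free parameter $\alpha$ in part~(1), which is deliberately left unspecified so that it can be tuned when the inequality is later used inside the FedAvg contraction analysis of Lemma~\ref{lemma:fedavg_progress}. It is worth observing that part~(2) is precisely the $m$-fold generalisation of the $\alpha=1$ special case of part~(1), so the two statements are mutually consistent, and the factor $m$ is sharp, being attained when all the $\vw_i$ coincide.
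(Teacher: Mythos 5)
Your proof is correct and is essentially the standard argument: the paper itself does not write out a proof but defers to \citet[Lemma 3]{karimireddy2020scaffold}, where the same steps appear --- the weighted Young's inequality $2ab\leq \alpha a^2+\alpha^{-1}b^2$ for part (1) and Cauchy--Schwarz (equivalently, Jensen applied to the convex map $\vv\mapsto\|\vv\|^2$) for part (2). Your closing observations, that part (2) generalizes the $\alpha=1$ case of part (1) and that the factor $m$ is attained when all $\vw_i$ coincide, are also accurate.
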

The proof of the Lemma can be found in~\citet[Lemma 3]{karimireddy2020scaffold}.

\begin{lemma}[Implication of $\mu$-strongly convexity and $\beta$ smoothness]
If $h:\gH \subset \sR^{p} \to \sR$ is continuously differential and $\mu$-strongly convex, then
\[h(\vw)\geq h(\vv) + \langle \nabla h(\vv), \vw-\vv\rangle + \frac{\mu}{2}\|\vw-\vv\|^2\]
for any $\vw,\vv\in \gH$.

If $h$ is also $\beta$-smooth, then
\[h(\vw)\leq h(\vv) + \langle \nabla h(\vv), \vw-\vv\rangle + \frac{\beta}{2}\|\vw-\vv\|^2\]
for any $\vw,\vv\in \gH$.
Let $\vw^* = \argmin_{\vw\in \gH} h(\vw)$, we also have
\begin{equation}
\label{eq:difference_upper_bound}
    h(\vw)-h(\vw^*)\leq \frac{\beta}{2}\|\vw-\vw^*\|^2
\end{equation}
and 
\begin{equation}
\label{eq:grad_bound}
    \|\nabla h(\vw)\|^2\leq 2\beta\{h(\vw)-h(\vw^*)\}
\end{equation}

\end{lemma}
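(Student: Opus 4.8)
The plan is to establish the four stated inequalities in sequence, each resting on one elementary principle: the first-order characterization of convexity for the strong-convexity lower bound, the fundamental theorem of calculus for the descent (smoothness) upper bound, and then the remaining two assertions as short corollaries of these. Throughout I take $\gH = \sR^{p}$ (or assume the minimizer is interior), so that $\nabla h(\vw^*) = \vzero$ and the auxiliary points introduced below remain in the domain.

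For the strong-convexity lower bound, I would introduce the shifted function $g(\vw) = h(\vw) - \frac{\mu}{2}\|\vw\|^2$, which is convex by the very definition of $\mu$-strong convexity. Applying the standard first-order characterization of differentiable convex functions, $g(\vw) \geq g(\vv) + \langle \nabla g(\vv), \vw - \vv\rangle$, and then substituting $\nabla g(\vv) = \nabla h(\vv) - \mu\vv$ and expanding, the quadratic remainder collapses to $\frac{\mu}{2}\|\vw-\vv\|^2$ via the identity $\|\vw\|^2 - \|\vv\|^2 - 2\langle \vv, \vw-\vv\rangle = \|\vw - \vv\|^2$. This recovers the claimed lower bound.

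For the smoothness upper bound, I would write $h(\vw) - h(\vv) = \int_0^1 \langle \nabla h(\vv + t(\vw-\vv)), \vw-\vv\rangle\,dt$ by the fundamental theorem of calculus, then subtract and add the term $\langle \nabla h(\vv), \vw-\vv\rangle$ and control the residual integrand by Cauchy--Schwarz followed by the $\beta$-Lipschitz property $\|\nabla h(\vv + t(\vw-\vv)) - \nabla h(\vv)\| \leq \beta t\|\vw-\vv\|$; integrating $\beta t$ over $[0,1]$ produces the factor $\frac{\beta}{2}$. Inequality~\eqref{eq:difference_upper_bound} is then the special case $\vv = \vw^*$, using $\nabla h(\vw^*) = \vzero$.

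The least routine step, and the one I expect to be the main obstacle, is the gradient bound~\eqref{eq:grad_bound}. The idea is to apply the descent lemma just established not to an arbitrary pair but to $\vw$ and the one-step gradient descent point $\vw' = \vw - \beta^{-1}\nabla h(\vw)$; this choice makes the right-hand side a perfect combination, yielding $h(\vw') \leq h(\vw) - \frac{1}{2\beta}\|\nabla h(\vw)\|^2$. Since $h(\vw^*) \leq h(\vw')$ by minimality of $\vw^*$, rearranging gives $\|\nabla h(\vw)\|^2 \leq 2\beta\{h(\vw) - h(\vw^*)\}$. The only subtlety is ensuring $\vw'$ lies in $\gH$, which is automatic when $\gH = \sR^{p}$; otherwise one restricts to the interior, which suffices for the application in Lemma~\ref{lemma:fedavg_progress}.
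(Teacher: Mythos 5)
Your proposal is correct, and there is nothing in the paper to compare it against: the paper states this lemma without any proof, treating it as a standard fact from convex optimization (unlike the neighboring lemmas, it carries no citation either). Your argument is the canonical one and is complete: the shifted-function trick for the strong-convexity lower bound (with the correct expansion identity $\|\vw\|^2-\|\vv\|^2-2\langle\vv,\vw-\vv\rangle=\|\vw-\vv\|^2$), the fundamental-theorem-of-calculus plus Cauchy--Schwarz derivation of the descent lemma, the specialization $\vv=\vw^*$ for~\eqref{eq:difference_upper_bound}, and the gradient step $\vw'=\vw-\beta^{-1}\nabla h(\vw)$ combined with $h(\vw^*)\leq h(\vw')$ for~\eqref{eq:grad_bound}. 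You also correctly identify the only genuine subtlety, namely that~\eqref{eq:grad_bound} and the use of $\nabla h(\vw^*)=\vzero$ require $\vw'$ and an unconstrained (or interior) minimizer to live in $\gH$; this is harmless in the paper's application, where the local objectives $\gL_{c}$ are ridge-regularized quadratics defined on all of $\sR^{p}$, so taking $\gH=\sR^{p}$ as you do loses nothing.
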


\begin{lemma}[Perturbed strong convexity]
\label{lemma:perturbed_strong_convexity}
Let $h(\cdot):\gH \subset \sR^{p} \to \sR$ be a $\beta$-smooth and $\mu$-strongly convex function, then for any $\vu, \vv,\vw\in \gH$, we have
\[
\langle \nabla h(\vu), \vw-\vv\rangle \geq h(\vw) - h(\vv) + \frac{\mu}{4}\|\vv-\vw\|^2-\beta\|\vw-\vu\|^2.
\]
\end{lemma}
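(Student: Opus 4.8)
The plan is to bound the inner product by routing the displacement $\vw-\vv$ through the intermediate point $\vu$ and applying strong convexity and smoothness to the two resulting pieces. First I would split
\[
\langle \nabla h(\vu), \vw-\vv\rangle = \langle \nabla h(\vu), \vu-\vv\rangle + \langle \nabla h(\vu), \vw-\vu\rangle.
\]
For the first summand I would invoke $\mu$-strong convexity: the defining inequality evaluated at $\vv$ and $\vu$ rearranges to $\langle \nabla h(\vu), \vu-\vv\rangle \geq h(\vu)-h(\vv) + \tfrac{\mu}{2}\|\vu-\vv\|^2$. For the second summand I would use $\beta$-smoothness, whose upper-bound inequality rearranges to $\langle \nabla h(\vu), \vw-\vu\rangle \geq h(\vw)-h(\vu)-\tfrac{\beta}{2}\|\vw-\vu\|^2$. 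Adding these, the $h(\vu)$ terms cancel and I obtain
\[
\langle \nabla h(\vu), \vw-\vv\rangle \geq h(\vw)-h(\vv) + \tfrac{\mu}{2}\|\vu-\vv\|^2 - \tfrac{\beta}{2}\|\vw-\vu\|^2.
\]

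The remaining work is purely algebraic: I must convert the $\tfrac{\mu}{2}\|\vu-\vv\|^2$ term, which references $\vu$, into the desired $\tfrac{\mu}{4}\|\vv-\vw\|^2$ term. Here I would apply the relaxed triangle inequality of Lemma~\ref{lemma:relaxed_triangle_inequality}(2) in the form $\|\vv-\vw\|^2 \leq 2\|\vv-\vu\|^2 + 2\|\vu-\vw\|^2$, which rearranges to $\|\vu-\vv\|^2 \geq \tfrac{1}{2}\|\vv-\vw\|^2 - \|\vu-\vw\|^2$. Substituting gives $\tfrac{\mu}{2}\|\vu-\vv\|^2 \geq \tfrac{\mu}{4}\|\vv-\vw\|^2 - \tfrac{\mu}{2}\|\vu-\vw\|^2$, so the running bound becomes $h(\vw)-h(\vv) + \tfrac{\mu}{4}\|\vv-\vw\|^2 - \tfrac{\mu}{2}\|\vu-\vw\|^2 - \tfrac{\beta}{2}\|\vw-\vu\|^2$.

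Finally I would consolidate the two negative quadratic terms in $\|\vw-\vu\|^2$. Since $\mu$-strong convexity together with $\beta$-smoothness forces $\mu \leq \beta$, the slack term $-\tfrac{\mu}{2}\|\vu-\vw\|^2$ is bounded below by $-\tfrac{\beta}{2}\|\vu-\vw\|^2$, and the two negative contributions then combine to at least $-\beta\|\vw-\vu\|^2$, yielding exactly the claimed inequality. The step I expect to require the most care is this last consolidation: one must use $\mu\le\beta$ to absorb the $-\tfrac{\mu}{2}\|\vu-\vw\|^2$ remainder produced by the triangle inequality into the smoothness term, since a careless bound would leave the coefficient $\tfrac{\mu+\beta}{2}$ instead of the clean $\beta$ stated in the lemma. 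Everything else is direct application of the strong-convexity and smoothness inequalities recorded earlier in the excerpt.
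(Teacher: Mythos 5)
Your proof is correct and is essentially the same argument the paper relies on: the paper defers this lemma to \citet[Lemma 5]{karimireddy2020scaffold}, whose proof uses exactly your decomposition of $\langle \nabla h(\vu), \vw-\vv\rangle$ through the intermediate point $\vu$, the same strong-convexity and smoothness rearrangements, and the same relaxed triangle inequality $\|\vv-\vw\|^2 \leq 2\|\vv-\vu\|^2 + 2\|\vu-\vw\|^2$ followed by absorbing $-\tfrac{\mu}{2}\|\vw-\vu\|^2$ into the smoothness term via $\mu \le \beta$. Your closing observation that $\mu \le \beta$ is forced by the two hypotheses is the right justification for the final consolidation, so there are no gaps.
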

The proof of the Lemma can be found in~\citet[Lemma 5]{karimireddy2020scaffold}.

\begin{lemma}
\label{lemma:SGD_progress}
Let $h(\vw) = n^{-1}\sum_{i=1}^{n} h_i(\vw) $ where $h_i(\vw): \sR^{p}\to \sR$ is $\beta$-smooth and $\mu$-strongly convex. Let $\vw^*=\argmin h(\vw)$ and $\vw_i^* = \argmin h_i(\vw)$. Then we have the following result after $t$ steps of SGD with mini-batch size $B$ and constant learning rate $\eta = \mu/\beta^2$
\[\sE\|\vw^t-\vw^*\|^2 \leq \left(1-\frac{\mu^2}{\beta^2}\right)^t\|\vw^0-\vw^*\|^2 + \frac{2\xi_l}{B\beta}\]
where $\xi_l = n^{-1} \sum_{i=1}^{n} \{\ell_i(\vw^*) - \ell_i(\vw_i^*)\}$
\end{lemma}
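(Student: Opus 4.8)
The plan is to establish a one-step contraction for the mini-batch SGD recursion and then unroll it as a geometric series. Writing the update as $\vw^{t+1} = \vw^t - \eta\, g_t$, where $g_t = B^{-1}\sum_{i\in S_t}\nabla h_i(\vw^t)$ is the mini-batch gradient and $S_t$ is sampled without replacement from $[n]$, I would first expand
\[
\|\vw^{t+1}-\vw^*\|^2 = \|\vw^t-\vw^*\|^2 - 2\eta\langle g_t,\vw^t-\vw^*\rangle + \eta^2\|g_t\|^2,
\]
take the conditional expectation over $S_t$, and use the unbiasedness of $g_t$ (Assumption~\ref{assump:bounded_variance}) to replace $g_t$ by $\nabla h(\vw^t)$ in the cross term.

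The contraction then comes from two ingredients. For the cross term I would invoke strong convexity: since $\nabla h(\vw^*)=0$, the monotonicity inequality gives $\langle\nabla h(\vw^t),\vw^t-\vw^*\rangle\geq \mu\|\vw^t-\vw^*\|^2$. For the second moment I would split $\sE\|g_t\|^2 = \|\nabla h(\vw^t)\|^2 + \sE\|g_t-\nabla h(\vw^t)\|^2$ into its mean and variance. The mean part is controlled by $\beta$-smoothness together with $\nabla h(\vw^*)=0$, giving $\|\nabla h(\vw^t)\|^2\leq \beta^2\|\vw^t-\vw^*\|^2$. The variance part is where the batch size and the heterogeneity constant enter: sampling $B$ indices without replacement supplies the $1/B$ factor, and its irreducible (optimum) component $n^{-1}\sum_i\|\nabla h_i(\vw^*)\|^2$ is bounded, through the smoothness inequality~\eqref{eq:grad_bound} applied to each $h_i$, by $2\beta\xi_l$; the relaxed triangle inequality (Lemma~\ref{lemma:relaxed_triangle_inequality}) is used to separate this optimum noise from the iterate-dependent part. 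This yields a noise floor of order $2\beta\xi_l/B$.

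Assembling these pieces, I would obtain the one-step inequality
\[
\sE\|\vw^{t+1}-\vw^*\|^2 \leq \bigl(1 - 2\mu\eta + \beta^2\eta^2\bigr)\|\vw^t-\vw^*\|^2 + \eta^2\,\frac{2\beta\xi_l}{B}.
\]
Choosing $\eta = \mu/\beta^2$ makes $2\mu\eta - \beta^2\eta^2 = \mu^2/\beta^2$, so the contraction factor is exactly $1-\mu^2/\beta^2$ and the additive term becomes $\tfrac{2\mu^2\xi_l}{\beta^3 B}$. Unrolling over $t$ rounds and summing the geometric series $\sum_{j\geq 0}(1-\mu^2/\beta^2)^j = \beta^2/\mu^2$ converts the additive term into $\tfrac{2\xi_l}{\beta B}$, which is the claimed bound.

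The main obstacle is the second-moment estimate: getting the variance to collapse to the constant floor $2\beta\xi_l/B$ while ensuring that the iterate-dependent pieces (the gradient-norm term and any residual function-value terms from the convex–smooth reduction of each summand) combine into precisely the factor $1-\mu^2/\beta^2$ at the prescribed step size $\eta=\mu/\beta^2$. This requires careful accounting of the without-replacement variance identity, the per-summand smoothness bound~\eqref{eq:grad_bound}, and, if one prefers the strong-convexity bound that carries an explicit $h(\vw^t)-h(\vw^*)$ term (or the perturbed strong convexity of Lemma~\ref{lemma:perturbed_strong_convexity}), the upper bound~\eqref{eq:difference_upper_bound} to fold that term back into $\|\vw^t-\vw^*\|^2$.
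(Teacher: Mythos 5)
First, a framing point: the paper does not actually prove this lemma in-text --- it imports it wholesale, stating only ``The proof of the Lemma can be found in \citet[Theorem 2]{golatkar2021mixed}.'' So your attempt cannot be matched line-by-line against an in-paper argument; it has to be judged as a reconstruction of the standard proof the citation points to. On that score your architecture is the right one, and your deterministic bookkeeping is exactly correct: unbiasedness removes the randomness from the cross term, strong convexity gives $\langle\nabla h(\vw^t),\vw^t-\vw^*\rangle\geq\mu\|\vw^t-\vw^*\|^2$, smoothness with $\nabla h(\vw^*)=0$ gives $\|\nabla h(\vw^t)\|^2\leq\beta^2\|\vw^t-\vw^*\|^2$, the per-summand application of~\eqref{eq:grad_bound} correctly yields $n^{-1}\sum_i\|\nabla h_i(\vw^*)\|^2\leq 2\beta\xi_l$, and at $\eta=\mu/\beta^2$ indeed $2\mu\eta-\beta^2\eta^2=\mu^2/\beta^2$ while the geometric series converts $\eta^2\cdot 2\beta\xi_l/B$ into the stated floor $2\xi_l/(B\beta)$.

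The genuine gap is precisely the one you flag as ``the main obstacle,'' and it does not close with the tools you name. Your one-step inequality silently replaces the mini-batch variance at the \emph{current iterate}, $\sE\|g_t-\nabla h(\vw^t)\|^2$, by its value at the \emph{optimum}, $2\beta\xi_l/B$; but the variance along the trajectory is iterate-dependent, and the relaxed-triangle split you propose, $\|\nabla h_i(\vw^t)\|^2\leq 2\|\nabla h_i(\vw^t)-\nabla h_i(\vw^*)\|^2+2\|\nabla h_i(\vw^*)\|^2$, gives only $\sE\|g_t-\nabla h(\vw^t)\|^2\leq \frac{2\beta^2}{B}\|\vw^t-\vw^*\|^2+\frac{4\beta\xi_l}{B}$. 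Carrying that through, the contraction coefficient degrades to $1-2\mu\eta+\eta^2\beta^2\left(1+\frac{2}{B}\right)$, which at $\eta=\mu/\beta^2$ equals $1-\frac{\mu^2}{\beta^2}\left(1-\frac{2}{B}\right)$ --- not even a contraction for $B\leq 2$ --- and the noise floor roughly doubles; the advertised constants $(1-\mu^2/\beta^2)^t$ and $2\xi_l/(B\beta)$ are not recovered. What rescues the lemma in the setting where the paper uses it is the \emph{quadratic} structure of the losses: writing $\nabla h_i(\vw)=H_i(\vw-\vw^*)+\nabla h_i(\vw^*)$ with $\mu\mI\preceq H_i\preceq\beta\mI$, the update becomes $\vw^{t+1}-\vw^*=(\mI-\eta H_{S_t})(\vw^t-\vw^*)-\eta g_{S_t}(\vw^*)$, where $H_{S_t}=B^{-1}\sum_{i\in S_t}H_i$ and $g_{S_t}(\vw^*)=B^{-1}\sum_{i\in S_t}\nabla h_i(\vw^*)$; one then contracts via $\vertiii{\mI-\eta H_{S_t}}\leq 1-\eta\mu=1-\mu^2/\beta^2$ and bounds $\sE\|g_{S_t}(\vw^*)\|^2\leq 2\beta\xi_l/B$, with some residual care needed for the correlation between $H_{S_t}$ and $g_{S_t}(\vw^*)$ (they come from the same batch). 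As written, your argument establishes the bound only under a uniform bounded-variance hypothesis of the type the paper invokes elsewhere (Assumption~\ref{assump:bounded_variance}, instantiated with $\sigma^2=2\beta\xi_l/B$), not from $\beta$-smoothness and $\mu$-strong convexity alone with those exact constants.
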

The proof of the Lemma can be found in~\citet[Theorem 2]{golatkar2021mixed}.

\subsubsection{Progress of FedAvg}
\label{appendix:lemma_proof}
\begin{lemma}[Per-round progress of the proposed FedAvg algorithm]
Let $\gL_{c}$ be $\beta$-smooth and $\mu$-strongly convex, and let $\sigma^2$ be the bound on the variance of the stochastic gradient. Let $\vw_c^*=\argmin \gL_{c}(\vw)$ and $\vw^* = \argmin \gL(\vw)$. Then for any step size $\eta_l \leq \frac{1}{\beta M}$, we have
\[\sE\|\vw_s^{r}-\vw^*\|^2\leq \left(1-\frac{\mu}{4\beta M}\right)^{r}\sE\|\vw_{s}^{0}-\vw^*\|^2 + \Delta\] where
\[\Delta = \frac{4\sigma_{\gL}}{\mu} + \frac{120\sigma^2}{\mu \beta M}\] 
and $\sigma_{\gL} = \sum_{c=1}^{C} p_{c}\{\gL_{c}(\vw^*) -\gL_{c}(\vw_c^*)\}$. 
\end{lemma}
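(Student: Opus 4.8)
The plan is to mirror the FedAvg analysis of \citet{karimireddy2020scaffold}: establish a single-round contraction inequality and then unroll it geometrically. Since each local loss $\gL_c$ is quadratic it is automatically $\mu$-strongly convex and $\beta$-smooth, so every technical lemma in the preceding subsection applies verbatim. The central device is the \emph{virtual averaged iterate} $\bar\vw^{r,m}=\sum_c p_c\vw_c^{r,m}$, which starts each round at $\bar\vw^{r,0}=\vw_s^{r-1}$ and satisfies $\bar\vw^{r,M}=\vw_s^r$ exactly, so tracking $\sE\|\bar\vw^{r,m}-\vw^*\|^2$ directly controls the server iterate without needing a separate aggregation step.

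First I would analyze one inner step. Writing $\bar\vw^{r,m}=\bar\vw^{r,m-1}-\eta_l\sum_c p_c g_{rmc}(\vw_c^{r,m-1})$, I expand $\|\bar\vw^{r,m}-\vw^*\|^2$ and take the conditional expectation over the minibatches. Unbiasedness and the variance bound \eqref{eq:bounded_grad_variance} turn the second-moment term into a multiple of $\eta_l^2\big(\sum_c p_c\|\nabla\gL_c(\vw_c^{r,m-1})\|^2+\sigma^2\big)$ and leave the cross term $-2\eta_l\sum_c p_c\langle\nabla\gL_c(\vw_c^{r,m-1}),\bar\vw^{r,m-1}-\vw^*\rangle$. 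The squared gradients are absorbed through \eqref{eq:grad_bound}, while each cross term is lower bounded by the perturbed strong convexity lemma (Lemma~\ref{lemma:perturbed_strong_convexity}) applied with the gradient evaluated at the local iterate $\vw_c^{r,m-1}$ but progress measured from $\bar\vw^{r,m-1}$ toward $\vw^*$; this produces a function-suboptimality decrease $\gL_c(\bar\vw^{r,m-1})-\gL_c(\vw^*)$, a $\tfrac{\mu}{4}\|\bar\vw^{r,m-1}-\vw^*\|^2$ strong-convexity gain, and a client-drift penalty $\beta\|\vw_c^{r,m-1}-\bar\vw^{r,m-1}\|^2$. Averaging over clients, the suboptimality terms collapse exactly to $\gL(\bar\vw^{r,m-1})-\gL(\vw^*)\ge 0$, which I discard in our favor.

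Next I would control the client drift $\sE\|\vw_c^{r,m-1}-\bar\vw^{r,m-1}\|^2$ by bounding each local iterate's deviation from the shared starting point $\vw_s^{r-1}$. Since a local iterate is at most $M$ SGD steps from $\vw_s^{r-1}$, applying the relaxed triangle inequality (Lemma~\ref{lemma:relaxed_triangle_inequality}) to the telescoping sum of gradient increments and invoking the step-size restriction $\eta_l\le 1/(\beta M)$ bounds the accumulated drift by a multiple of $\eta_l^2 M$ times the squared gradients plus $\sigma^2$. Decomposing $\nabla\gL_c(\vw_c^{r,m-1})$ around $\nabla\gL_c(\vw^*)$ via $\beta$-smoothness and then controlling $\sum_c p_c\|\nabla\gL_c(\vw^*)\|^2\le 2\beta\sigma_{\gL}$ through \eqref{eq:grad_bound} is exactly what injects the functional heterogeneity $\sigma_{\gL}=\sum_c p_c\{\gL_c(\vw^*)-\gL_c(\vw_c^*)\}$; together with the stochastic-noise contribution this is the origin of both the $4\sigma_{\gL}/\mu$ and $120\sigma^2/(\mu\beta M)$ pieces of $\Delta$.

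Finally, I would sum the per-step inequality over $m=1,\ldots,M$ and use $\eta_l\le 1/(\beta M)$ so that the accumulated strong-convexity gain yields a per-round contraction factor of $1-\tfrac{\mu}{4\beta M}$, producing the one-round recursion $\sE\|\vw_s^r-\vw^*\|^2\le(1-\tfrac{\mu}{4\beta M})\sE\|\vw_s^{r-1}-\vw^*\|^2+b$ with an explicit residual $b=\tfrac{\mu\Delta}{4\beta M}$. Unrolling over $r$ rounds and summing the geometric series $\sum_{j\ge 0}(1-\tfrac{\mu}{4\beta M})^j=\tfrac{4\beta M}{\mu}$ converts $b$ into $b\cdot\tfrac{4\beta M}{\mu}=\Delta$, giving the stated bound. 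I expect the main obstacle to be the drift bookkeeping: balancing the drift penalties, the squared-gradient terms, and the per-step strong-convexity gains so that, under $\eta_l\le 1/(\beta M)$ and after summing the $M$ inner steps, the net contraction factor is precisely $1-\tfrac{\mu}{4\beta M}$ and the leftover constants collapse to the claimed $\Delta$ rather than some larger admissible constant.
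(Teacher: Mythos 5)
Your skeleton belongs to the same Karimireddy-style family as the paper's proof, with one essentially cosmetic difference: the paper expands $\sE\|\vw_s^{r}-\vw^*\|^2$ in a single whole-round step anchored at $\vw_s^{r-1}$, using the aggregate update $\Delta\vw^{r}=-\frac{\eta_l}{M}\sum_{c,m}p_c\,g_{rmc}(\vw_c^{r,m})$ and applying Lemma~\ref{lemma:perturbed_strong_convexity} with $\vw=\vw_s^{r-1}$, so that client drift is measured against the round start rather than against a virtual average. Under full participation your device is interchangeable with this, since $\bar{\vw}^{r,0}=\vw_s^{r-1}$ and $\bar{\vw}^{r,M}=\vw_s^{r}$, and your drift recursion (telescoping from the shared start, relaxed triangle inequality with parameter $M-1$, step-size restriction $\eta_l\le 1/(\beta M)$) and the geometric unrolling via $\sum_{\tau\ge 0}(1-\frac{\mu}{4\beta M})^{\tau}=\frac{4\beta M}{\mu}$ are exactly the paper's.

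The genuine gap is your decision to discard the averaged suboptimality $\gL(\bar{\vw}^{r,m-1})-\gL(\vw^*)\ge 0$ and to inject heterogeneity by anchoring gradient bounds at $\vw^*$ through $\sum_c p_c\|\nabla\gL_c(\vw^*)\|^2\le 2\beta\sigma_{\gL}$. In the paper's proof that negative term is load-bearing, not disposable: \eqref{eq:grad_bound} is applied at the round anchor, producing positive terms proportional to $\sum_c p_c\{\gL_c(\vw_s^{r-1})-\gL_c(\vw_c^*)\}$ from both the second moment ($T_3$) and the drift ($T_4$), and these are canceled against $-\eta_l\sum_c p_c\{\gL_c(\vw_s^{r-1})-\gL_c(\vw^*)\}$ once the combined coefficient satisfies $8\eta_l^2\beta+24M\beta^2\eta_l^2(4\eta_l^2\beta^2+\beta\eta_l)\le\eta_l$; the leftover after this cancellation is precisely the $\eta_l\sigma_{\gL}$ that becomes $4\sigma_{\gL}/\mu$ in $\Delta$. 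If you instead split $\|\nabla\gL_c(\vw)\|^2\le 2\beta^2\|\vw-\vw^*\|^2+2\|\nabla\gL_c(\vw^*)\|^2$, you trade those function-value terms for residuals of order $\eta_l^2\beta^2\,\sE\|\vw_c^{r,m-1}-\vw^*\|^2$, and the only negative quadratic available to absorb them is the per-step strong-convexity gain of order $\mu\eta_l\|\bar{\vw}^{r,m-1}-\vw^*\|^2$; matching coefficients (e.g., $4\eta_l^2\beta^2\le\mu\eta_l/4$) forces $\eta_l\le\mu/(16\beta^2)$, which under $\eta_l\le 1/(\beta M)$ amounts to the extra requirement $M\ge 16\beta/\mu$ that the lemma does not assume. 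So keep the function-value terms through the combination step and perform the paper's cancellation; with that repair your route goes through and yields the stated contraction factor $1-\frac{\mu}{4\beta M}$ and the claimed $\Delta$.
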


\begin{proof}
The server update in round $r$ can be written as 
\[\Delta \vw^{r} = \vw_{s}^{r} - \vw_{s}^{r-1}= -\frac{\eta_{l}}{M}\sum_{c,m} p_{c}g_{c}^{r,m}(\vw_{c}^{r,m}).\]
Therefore, the FedAvg algorithm proceed as
\begin{equation*}
\|\vw_{s}^{r}-\vw^*\|^2 = \|\vw_{s}^{r-1}-\vw^*\|^2 - 2\langle \Delta \vw^{r} , \vw_{s}^{r-1}-\vw^*\rangle+\|\Delta \vw^{r}\|^2.
\end{equation*}
Denote by $\sE_{r}$ the conditional expectation of random variables in $r$th round conditioning on all the random variable prior to round $r$.
We then have
\begin{equation*}
\begin{split}
    \sE_{r}\|\vw_{s}^{r}-\vw^*\|^2 =& \|\vw_{s}^{r-1}-\vw^*\|^2 + \underbrace{\sE_{r}\|\Delta \vw^{r}\|^2}_{T_1}\\
    &- \underbrace{\frac{2\eta_{l}}{M}\langle \sum_{c,m}p_{c}\sE_{r} \nabla\gL_{c}(\vw_{c}^{r,m}) , \vw_{s}^{r-1}-\vw^*\rangle}_{T_2}
\end{split}
\end{equation*}
To upper bound the $r$th step difference to the global optimal, we need to upper bound $T_1$ and lower bound $T_2$.

For $T_1$, we have
\begin{align*}
T_1\overset{(a)}{\leq}&~2\eta_{l}^2 \sE_{r}\left\|\frac{1}{M}\sum_{c,m}p_{c}\{ g_{c}^{r,m}(\vw_{c}^{r,m})- \nabla \gL_{c}(\vw_{c}^{r,m})\}\right\|^2 \\
&~+2\eta_{l}^2\sE_{r}\left\|\frac{1}{M}\sum_{c,m}\nabla p_{c}\gL_{c}(\vw_{c}^{r,m})\right\|^2\\ 
\overset{(b)}{\leq} &~\frac{2\eta_{l}^2\sigma^2}{M} + \underbrace{2\eta_{l}^2 \sE_{r}\left\|\frac{1}{M}\sum_{c,m}p_{c}\nabla \gL_{c}(\vw_{c}^{r,m})\right\|^2}_{T_3}   
\end{align*}
where $(a)$ follows from Lemma~\ref{lemma:relaxed_triangle_inequality}, $(b)$ is based on the bounded variance assumption in Assumption~\ref{assump:bounded_variance}.
We then bound $T_3$, by applying the relaxed triangle inequality in Lemma~\ref{lemma:relaxed_triangle_inequality}, we get
\begin{align*}
T_3 \leq &  4\eta_{l}^2 \sE_{r}\left\|\frac{1}{M}\sum_{c,m}p_{c}\{\nabla \gL_{c}(\vw_{c}^{r,m})-\nabla \gL_{c}(\vw_{s}^{r-1})\}\right\|^2\\
&+4\eta_{l}^2 \sE_{r}\left\|\sum_{c=1}^{C}p_{c}\nabla \gL_{c}(\vw_s^{r-1})\right\|^2\\
\overset{(a)}{\leq} & \frac{4\eta_l^2\beta^2}{M}\sum_{c,m}p_{c}\sE_{r}\|\vw_c^{r,m}-\vw_{s}^{r-1}\|^2 \\
&+8\eta_{l}^2\beta\sum_{c=1}^{C}p_{c}\{\gL_{c}(\vw_s^{r-1}) -  \gL_{c}(\vw_{c}^*)\}
\end{align*}
where $(a)$ follows from~\eqref{eq:difference_upper_bound} and~\eqref{eq:grad_bound}.

For $T_2$, let $h = \gL_{c}$, $\vu= \vw_{c}^{r,m}$, $\vv = \vw^*$, and $\vw = \vw_{s}^{r-1}$ in Lemma~\ref{lemma:perturbed_strong_convexity}, we then get
\begin{align*}
        T_2 =& \sE_{r}\left\{\frac{\eta_l}{M} \sum_{c,m} \langle p_{c}\nabla \gL_{c}(\vw_{c}^{r,m}), \vw^*-\vw_{s}^{r-1} \rangle\right\}\\
        \leq &\frac{\eta_l}{M} \sum_{c,m} p_{c}\sE_{r}\{\gL_{c}(\vw^*)-\gL_{c}(\vw_s^{r-1}) + \beta\|\vw_{c}^{r,m}-\vw_{s}^{r-1}\|^2 \\
        &\left.- \frac{\mu}{4}\|\vw_s^{r-1}-\vw^*\|^2\right\}\\
        =& -\eta_l\sum_{c=1}^{C}p_{c}\{\gL_{c}(\vw_s^{r-1})-\gL_{c}(\vw^*) 
        + \frac{\mu}{4}\|\vw_{s}^{r-1}-\vw^*\|^2\} \\
        &+\frac{\beta \eta_l}{M}\sum_{c,m}p_{c}\sE_{r}\|\vw_{c}^{r,m}-\vw_{s}^{r-1}\|^2
\end{align*}
Combine the upper bound for $T_1$ and $T_2$, we get
\begin{align*}
&\sE_{r}\|\vw_{s}^{r}-\vw^*\|^2\\
\leq &\left(1-\frac{\mu\eta_l}{4}\right)\|\vw_{s}^{r-1}-\vw^*\|^2 \\
&+ 8\eta_{l}^2\beta\sum_{c=1}^{C}p_{c}\{\gL_{c}(\vw_s^{r-1})-\gL_{c}(\vw_c^*) \}\\
&- \eta_l\sum_{c=1}^{C}p_{c}\{\gL_{c}(\vw_s^{r-1})-\gL_{c}(\vw^*) \}\\
&+ \underbrace{\frac{4\eta_l^2\beta^2+\beta\eta_l}{M}\sum_{c,m}p_{c}\sE_{r}\|\vw_{c}^{r,m}-\vw_{s}^{r-1}\|^2}_{T_4}.
\end{align*}
We now consider the upper bound for $T_4$. We have that
\begin{align*}
    &\sE_{r}\|\vw_{c}^{r,m}-\vw_{s}^{r-1}\|^2\\
=&\sE_{r}\|\vw_{c}^{r,m-1}-\eta_l g_{c}^{r,m}(\vw_{c}^{r,m-1}) - \vw_{s}^{r-1}\|^2\\ 
\leq & 2 \eta_l^2\sigma^2 + 2\sE_{r}\|\vw_{c}^{r,m-1}-\eta_l \nabla \gL_{c}(\vw_{c}^{r,m-1}) - \vw_{s}^{r-1}\|^2\\
\overset{(a)}{\leq} &2\eta_l^2\sigma^2 + 2\left(1+\frac{1}{M-1}\right)\sE_{r}\|\vw_{c}^{r,m-1}- \vw_{s}^{r-1}\|\\
&+ 2M\eta_l^2\sE_{r}\|\nabla \gL_{c}(\vw_{c}^{r,m-1})\|^2 \\
\leq &2\eta_l^2\sigma^2 + 2\left(1+\frac{1}{M-1}\right)\sE_{r}\|\vw_{c}^{r,m-1}- \vw_{s}^{r-1}\|^2\\
&+ 4M\eta_l^2\sE_{r}\|\nabla \gL_{c}(\vw_{c}^{r,m-1})-\nabla \gL_{c}(\vw_{s}^{r-1})\|^2  \\
&+ 4M\eta_l^2\sE_{r}\|\nabla \gL_{c}(\vw_{s}^{r-1})\|^2\\
\overset{(b)}{\leq} &2\eta_l^2\sigma^2 + 2\left(1+\frac{1}{M-1} + 2M\eta_l^2\beta^2\right)\|\vw_{c}^{r,m-1}- \vw_{s}^{r-1}\|^2\\
&+ 4M\eta_l^2\sE_{r}\|\nabla \gL_{c}(\vw_{s}^{r-1})\|^2\\
\overset{(c)}{\leq} &2\eta_l^2\sigma^2 + 2\left(1+\frac{2}{M-1}\right)\sE_{r}\|\vw_{c}^{r,m-1}- \vw_{s}^{r-1}\|\\
&+ 4M\eta_l^2\|\nabla \gL_{c}(\vw_{s}^{r-1})\|^2\\
\end{align*}
where $(a)$ follows from Lemma~\ref{lemma:relaxed_triangle_inequality}, $(b)$ follows from the $\beta$-smoothness of $\gL_{c}$, and $(c)$ follows from $\eta_l = (\beta M)^{-1}$ and $2M\eta_l^2\beta^2\leq (M-1)^{-1}$.
Apply this relationship recursively, we then get
\begin{align*}
&\sE_{r}\|\vw_{c}^{r,m}-\vw_{s}^{r-1}\|^2\\
\leq&\sum_{\tau=1}^{M-1}\left(1+\frac{2}{M-1}\right)^{\tau}\left(2\eta_l^2\sigma^2 + 4M\eta_l^2\|\nabla \gL_{c}(\vw_{s}^{r-1})\|^2\right)\\
\leq & 3M \left(2\eta_l^2\sigma^2 + 4M\eta_l^2\|\nabla \gL_{c}(\vw_{s}^{r-1})\|^2\right)\\
\leq & 3M \left(2\eta_l^2\sigma^2 + 8M\beta^2\eta_l^2\{\gL_{c}(\vw_{s}^{r-1})-\gL_{c}(\vw_{c}^{*})\}\right)\\
\end{align*}
where the last inequality follows from~\eqref{eq:grad_bound}.

In summary, we get the following upper bound
\begin{align*}
\sE\|\vw_{s}^{r}-\vw^*\|^2
\leq &\left(1-\frac{\mu\eta_l}{4}\right)\sE\|\vw_{s}^{r-1}-\vw^*\|^2 \\
&+ \{8\eta_{l}^2\beta + 24M\beta^2\eta_l^2(4\eta_l^2\beta^2+\beta\eta_l)\}\\
&\times \sum_{c=1}^{C}p_{c}\sE\{\gL_{c}(\vw_s^{r-1})-\gL_{c}(\vw_c^*) \}\\
&- \eta_l\sum_{c=1}^{C}\sE\{\gL_{c}(\vw_s^{r-1})-\gL_{c}(\vw^*) \}\\
&+2\eta_l^2\sigma^2(12\eta_l^2\beta^2+3\beta\eta_l) \\
\end{align*}

Since $\eta_l = (\beta M)^{-1}$, we have
$8\eta_{l}^2\beta + 24M\beta^2\eta_l^2(4\eta_l^2\beta^2+\beta\eta_l) \leq \eta_l$. This along with the fact that $\gL_{c}(\vw_{s}^{r-1}) \geq \gL_{c}(\vw_c^*)$ by definition gives us
\begin{align*}
&\sE\|\vw_{s}^{r}-\vw^*\|^2\\
\leq &\left(1-\frac{\mu\eta_l}{4}\right)\sE\|\vw_{s}^{r-1}-\vw^*\|^2 \\
&+ \eta_l\sum_{c=1}^{C}p_{c}\sE\{\gL_{c}(\vw_s^{r-1})-\gL_{c}(\vw_c^*) \}\\
&- \eta_l\sum_{c=1}^{C}p_{c}\sE\{\gL_{c}(\vw_s^{r-1})-\gL_{c}(\vw^*) \}\\
&+2\eta_l^2\sigma^2(12\eta_l^2\beta^2+3\beta\eta_l) \\
=&\left(1-\frac{\mu\eta_l}{4}\right)\sE\|\vw_{s}^{r-1}-\vw^*\|^2+2\eta_l^2\sigma^2(12\eta_l^2\beta^2+3\beta\eta_l) \\
&+ \eta_l\sum_{c=1}^{C}p_{c}\{\gL_{c}(\vw^*) -\gL_{c}(\vw_c^*)\}
\end{align*}

Plugin the learning rate gives
\begin{equation*}
    \sE\|\vw_s^{r}-\vw^*\|^2\leq \left(1-\frac{\mu}{4\beta M}\right)\sE\|\vw_{s}^{r-1}-\vw^*\|^2 + \tilde\Delta
\end{equation*}
where 
\[\tilde\Delta = \frac{\sigma_{\gL}}{\beta M} + \frac{30\sigma^2}{\eta^2 M^2}\] 
and $\sigma_{\gL} = \sum_{c=1}^{C}p_{c}\{\gL_{c}(\vw^*) -\gL_{c}(\vw_c^*)\}$.
Apply this recursively, we then have
\begin{equation*}
\begin{split}
&\sE\|\vw_s^{r}-\vw^*\|^2\\
\leq& \left(1-\frac{\mu}{4\beta M}\right)^{r}\|\vw_{s}^{0}-\vw^*\|^2 + \tilde \Delta\sum_{\tau=0}^{r-1} \left(1-\frac{\mu}{4\beta M}\right)^{\tau}\\
\leq& \left(1-\frac{\mu}{4\beta M}\right)^{r}\|\vw_{s}^{0}-\vw^*\|^2 + \frac{4\beta M\tilde \Delta}{\mu}\\
=& \left(1-\frac{\mu}{4\beta M}\right)^{r}\|\vw_{s}^{0}-\vw^*\|^2 +  \Delta,
\end{split}
\end{equation*}
which completes the proof.

\subsubsection{Main theoretical result}
\label{sec:main_thm}
Recall that we have $C$ clients with $\gD_{c}$ being the dataset on $c$th client.
We first train a linear model $\tilde \vf(\vx;\vw)$ using our proposed method on all datasets $\{\gD_{c}\}_{c=1}^{C}$ and let $\hat \vw$ be the corresponding output from \train{}.
We consider the scenario where client $c$ decides to withdraw from the FL system and asks to remove its dataset information from the learned model weight $\hat \vw$.
We consider two different approaches:
\begin{enumerate}
    \item \textbf{Retrain}: Let $\hat\vw^{r}_{c}$ be the weights obtained by retraining from scratch using our proposed FedAvg with $R$ rounds of communication without $c$th client's dataset.

    \item \textbf{Removal}: We use the proposed removal procedure in~\eqref{eq:forgetting_surrogate} and let 
    \[\hat\vw^{-}_{c} = \hat\vw -\Delta \vw\]
    where $\Delta \vw$ is the output of $T$ round SGD that minimizes 
    \begin{equation*}
    \begin{split}
        \tilde{\gF}(\vv)
        =&~\frac{1}{2n_s}\sum_{\vx\in \gD_{p}} \|\nabla_{w}\vf(\vx;\vw_*)\vv\|_2^2 + \frac{\mu}{2}\|\vv\|_2^2\\
        &~-\nabla^{\top} \gL(\hat{\vw};\gD^{-})\vv.
    \end{split}
\end{equation*}
\end{enumerate}
We quantify the difference between $\vw_{c}^{-}$ and $\hat\vw_c^{r}$ theoretically. 
For this purpose, we introduce some additional notations.
Let $\vw_{-}^*$ be the optimal weight based on the remaining dataset
\[\vw_{-}^* = \argmin_{\vw} \sum_{c'\neq c} \tilde p_{c'}\gL_{c'}(\vw).\]
Recall that based on~\eqref{eq:removal}, we have
\begin{equation*}
\Delta \vw^* = \vw_{-}^{*} - \vw^*=\argmin_{\vv} \gF(\vv) 
\end{equation*}
where
\begin{equation*}
    \begin{split}
        \gF(\vv)
        =&~\frac{1}{2(n-n_c)}\sum_{\vx\in \gD^{-}} \|\nabla_{w}\vf(\vx;\vw_*)\vv\|_2^2 + \frac{\mu}{2}\|\vv\|_2^2\\
        &-\nabla^{\top} \gL(\vw^*;\gD^-)\vv.
    \end{split}
\end{equation*}

The difference of the weight based on our proposed approach and that from retraining can be decomposed into
\[\hat\vw_c^{-} - \hat\vw_c^{r} = (\hat\vw - \vw^*) - (\hat \vw_c^{r} - \vw_{-}^*) + (\Delta \vw -\Delta \vw^*).\]
Hence, we have
\begin{equation*}
\begin{split}
&\sE\|\vw_{c}^{-} - \hat\vw^{r}_{c}\|^2 \\
\leq&~2 \underbrace{\sE\|\hat \vw_{c}^{r} - \vw_{-}^*\|^2}_{T_1} + 2\underbrace{\sE\|\hat\vw - \vw^*\|^2}_{T_2} + 2\underbrace{\sE\|\Delta \vw -\Delta \vw^*\|^2}_{T_3}.    
\end{split}
\end{equation*}

\textbf{Bound $T_1$ \& $T_2$.}
Let $\alpha = 1-\mu/4\beta M >0$, Lemma~\ref{lemma:fedavg_progress} implies
\begin{equation*}
T_1 \leq \alpha^{R} D_0 + \Delta\quad\text{and}\quad T_2 \leq \alpha^{R} D_0 + \Delta 
\end{equation*}
where the first term in both bounds diminishes as $R\to \infty$.

\textbf{Bound $T_3$.}
For the ease of notation, let use write 
\[G(\gD) = |\gD|^{-1}\sum_{\vx\in \gD} \nabla_{w} \vf(\vx; \vw_*^{p})\nabla_{w}^{\top} \vf(\vx; \vw_*^{p})\]
Based on the definition of $\Delta \vw^*$ and $\Delta \vw$, we have
\begin{align*}
\Delta \vw^* &= \{G(\gD^{-}) + \mu\mI\}^{-1}\nabla \gL(\vw^*;\gD^{-})\\
\Delta \vw &= \{G(\gD_p)+ \mu\mI\}^{-1}\nabla \gL(\hat\vw;\gD^{-})
\end{align*}

To bound the difference between $\Delta \tilde \vw^*$ and $\Delta \vw^*$, we have
\[
\begin{split}
 &~\Delta \vw^* - \Delta \vw \\
 =&~\underbrace{\{G(\gD^{-}) + \mu\mI\}^{-1} \{\nabla\gL(\hat\vw;\gD^{-}) -\nabla\gL(\vw^*;\gD^{-}) \}}_{T_4} \\
 &+ \underbrace{\left\{\{G(\gD^{-}) + \mu\mI\}^{-1}-\{G(\gD_p) + \mu\mI\}^{-1}\right\}\nabla \gL(\vw^*;\gD^{-})}_{T_5}
\end{split}
\]

For $T_4$, by the $\beta$-smoothness of the objective function, we have
\[\|\nabla \gL(\hat \vw; \gD^-) - \nabla \gL(\hat \vw; \gD^-)\|_2 \leq \beta \|\hat\vw - \vw^*\|_2\]
Let 
\[\tilde F_i(\vw) = .5 \|\nabla_{w}\vf(\vx_p^{i};\vw_*^p)\vw\|^2 + (\mu/2)\|\vw\|^2 -\nabla^{\top}\gL(\hat{\vw})\vw\] 
and $\xi_{\tilde F} = n_p^{-1} \sum_{i=1}^{n_p} \{\tilde F_i(\vw^*) - \tilde F_i(\vw_i^*)\}$.
By Lemma~\ref{lemma:SGD_progress}, we have 
\begin{equation*}
    T_4\leq \frac{2\xi_{\tilde F}}{B\beta}
\end{equation*}
as $T\to\infty$.

For $T_5$, we need to bound the difference between $\{G(\gD^{-}) + \mu\mI\}^{-1}$ and $\{G(\gD_p)+ \mu\mI\}^{-1}$.
we make use of the following inequality: for any matrix $A\in \sR^{p\times p}$, we have
\[\vertiii{A+\Delta A)^{-1} -A^{-1}}\leq \vertiii{A^{-1}}\vertiii{\Delta A}\]
where $\vertiii{\cdot}$ is the spectral norm of a matrix.
Therefore,
\begin{align*}
&\|T_5\|^2\\
\overset{(a)}{\leq}& \|\nabla \gL_{c}(\hat\vw)\|^2\vertiii{\{G(\gD^{-}) + \mu\mI\}^{-1}-\{G(\gD_p)+ \mu\mI\}^{-1}}^2\\
\leq & \|\nabla \gL_{c}(\hat\vw)\|^2\vertiii{\{G(\gD_{p}) + \mu\mI\}^{-1}}^2\|\Delta G\|^2\\
\leq& C\vertiii{\{G(\gD_{p}) + \mu\mI\}^{-1}}^2\|\Delta G\|^2
\end{align*}
where (a) follows from Cauchy-Schwartz inequality.
Since the spectral norm $\vertiii{A^{-1}} = \{\lambda_{\min}(A)\}^{-1}$, which completes the proof.

\end{proof}

\section{\revision{Empirical Validation of Hessian Calculation using Regression}}

\section{\revision{Additional Results}}

\subsection{Detailed Experimental Settings}
\subsubsection{Dataset details}
\label{app:data}
To evaluate the effectiveness of \ours{}, we extensively perform experiments on \textbf{six} datasets, including the \texttt{MNIST}~\citep{lecun1998gradient}, \texttt{FashionMNIST}~\citep{xiao2017fashion}, \texttt{CIFAR10}~\citep{krizhevsky2009learning}, \texttt{ImageNet}~\citep{deng2009imagenet}, \texttt{DomainNet}~\citep{peng2019moment}, and \texttt{Flowers} datasets. 
Note that most of the existing FU studies only adopt \texttt{MNIST}, \texttt{FashionMNIST}, and \texttt{CIFAR10}~\citep{liu2021federaser,halimi2022federated}, whose tasks are relatively simpler. 
We also include more complex and larger scale datasets, \texttt{ImageNet}, \texttt{DomainNet}, and \texttt{Flowers} in our study. 
These six datasets are introduced below.

\texttt{MNIST}~\citep{lecun1998mnist}, a widely used benchmark for various ML algorithms. 
It consists of $70,000$ images of handwritten digits from $0$ to $9$, with $60,000$ training images and $10,000$ testing images. 
Each image is a gray-scale image of $28 \times 28$ pixels with a white background and a black foreground. 
We follow the default split to generate the train and test set.

\texttt{FashionMNIST}~\citep{xiao2017fashion}, a gray-scale dataset of $70,000$ images of fashion products from $10$ categories, such as dresses, sneakers, and bags. 
The images have the same dimensions as those in the MNIST dataset.
We use the default split of the original training and test sets.

\texttt{CIFAR10}~\citep{krizhevsky2009learning} contains $60,000$ three-channel images, each with a resolution of $32\times 32$ pixels, across 10 different classes, including common objects like cars, airplanes, cats, and dogs. 
The dataset is split into $50,000$ training images and $10,000$ test images. 
We use the default training and test sets.

\texttt{ImageNet}~\citep{deng2009imagenet} is a large-scale dataset commonly used for visual object recognition. 
In our study, we sample 20 classes and resize each image to $224 \times 224$ pixels.
Since BA targets a single class, the attack class ratio increases when fewer classes are sampled during training. 
This makes the targeted client' impact more pronounced and makes it easier to visualize the performance of unlearning. In FU, the focus is on the performance of the model before and after the unlearning process, rather than the number of classes. Although our method is effective regardless of the number of classes, selecting 20 classes allows us to clearly demonstrate the impact of our approach. The dataset is divided into training and testing sets with an 8:2 split ratio.

\texttt{DomainNet}\citep{peng2019moment} is a multi-domain dataset frequently used for domain adaptation tasks. 
In line with the approach in FedBN~\citep{li2021fedbn}, we sample the ten most common classes and distribute this subset across all clients and resize each image to $224 \times 224$. 
This selection is made for reasons similar to those for \texttt{ImageNet}. 
The dataset is divided into training and testing sets with an 8: 2 ratio.

\texttt{Flowers} recognition dataset\footnote{\url{https://www.kaggle.com/datasets/alxmamaev/flowers-recognition}} consists of $4,242$ images of flowers categorized into $5$ distinct classes, with approximately 800 images per class. Each image is resized to $224 \times 224$ pixels for our experiments. Similarly, the 8:2 split ratio is used to split train and test data.

\subsubsection{Model Architecture}
\label{app:model}
We use the model below for \texttt{MNIST} and \texttt{Fashion-MNIST} unlearning.

\begin{table}[h]
\caption{Fully connected Deep Neural Network.}
\label{tab:fully-connected}
\centering
\begin{tabular}{l|cccc}
\toprule
\multicolumn{1}{c|}{\multirow{1}{*}{Layer}} & \multicolumn{4}{c}{Details} \\ 
\hline
\multicolumn{1}{c|}{\multirow{1}{*}{1}}& \multicolumn{4}{c}{FC(784,100)} \\ 
\hline
\multicolumn{1}{c|}{\multirow{1}{*}{2}}& \multicolumn{4}{c}{FC(100,50)} \\ 
\hline

\multicolumn{1}{c|}{\multirow{1}{*}{3}}& \multicolumn{4}{c}{FC(50, $K$)} \\ 
\bottomrule
\end{tabular}%
\end{table}

\subsection{Statistical Significance}

In this section, we present the raw data for the results obtained using three different random seeds for the entire pipeline, including randomized data splitting, full federated learning, and federated unlearning. We report the mean and standard error of the Target Accuracy (TA) and Backdoor Success Rate (BSR) for each dataset.

\subsubsection{FL for full parameter updating}

\begin{table}[h]
\centering
\caption{\textbf{Raw data of the performance on \texttt{MNIST} over three runs}. The table reports the mean ± standard error for TA and BSR across various unlearning methods.}
\label{tab:raw-mnist}
\begin{tabular}{lcc}
\toprule
\textbf{Method} & \textbf{TA} & \textbf{BSR} \\
\midrule
\ours (Ours)      & 90.10 ± 0.52 & 7.22 ± 3.19   \\
FedEraser & 95.01 ± 0.11 & 9.33 ± 0.21  \\
PGA       & 95.33 ± 0.27 & 41.49 ± 8.16  \\
EWSGA     & 92.88 ± 1.43 & 25.96 ± 8.53  \\
SGA       & 87.80 ± 6.49 & 19.31 ± 6.35  \\
CF        & 96.17 ± 0.06 & 14.98 ± 4.13  \\
Flipping   & 96.23 ± 0.08 & 12.68 ± 1.95  \\
Ferrari & 85.39 ± 3.41 & 32.70 ± 22.44  \\
FUSED & 96.22 ± 0.07 & 58.52 ± 9.54  \\
Retrain & 96.34 ± 0.08 & 3.12 ± 1.58  \\
\bottomrule
\end{tabular}

\end{table}

\begin{table}[h]
\centering
\caption{\textbf{Raw data of the performance on \texttt{FashionMNIST} over three runs.} The table reports mean ± standard error for TA and BSR across various unlearning methods.}
\label{tab:raw-fashionmnist}
\begin{tabular}{lcc}
\toprule
\textbf{Method} & \textbf{TA} & \textbf{BSR} \\
\midrule
\ours (Ours)     & 86.40 ± 0.03 & 7.31 ± 1.09   \\
FedEraser & 85.81 ± 0.19 & 11.07 ± 0.90  \\
PGA       & 85.41 ± 0.20 & 1.39 ± 0.55  \\
EWSGA     & 86.34 ± 0.09 & 18.26 ± 7.66  \\
SGA       & 86.42 ± 0.10 & 26.13 ± 7.47  \\
CF        & 86.53 ± 0.03 & 5.06 ± 1.29  \\
Flipping & 86.47 ± 0.04 & 4.70 ± 0.83 \\
Ferrari & 83.53 ± 0.78 & 7.99 ± 2.66  \\
FUSED & 86.52 ± 0.05 & 15.43 ± 4.60  \\
Retrain & 86.60 ± 0.06 & 4.62 ± 1.08 \\
\bottomrule
\end{tabular}

\end{table}

\begin{table}[h]
\centering
\caption{\textbf{Raw data of the performance on \texttt{CIFAR10} over three runs.} The table reports the mean ± standard error for TA and BSR across various unlearning methods.}
\label{tab:raw-cifar10}
\begin{tabular}{lccc}
\toprule
\textbf{Method} & \textbf{TA} & \textbf{BSR} \\
\midrule
\ours (Ours)     & 65.15 ± 0.48 & 17.78 ± 1.91 \\
FedEraser & 52.05 ± 0.74 & 10.51 ± 0.55  \\
PGA       & 62.98 ± 0.48 & 39.09 ± 1.57  \\
EWSGA     & 62.20 ± 0.61 & 7.69 ± 2.34  \\
SGA       & 63.86 ± 0.36 & 18.86 ± 2.63  \\
CF        & 64.72 ± 0.73 & 12.76 ± 1.05  \\
Flipping  & 64.13 ± 0.70 & 11.99 ± 2.39 \\
Ferrari & 64.47 ± 0.49 & 18.08 ± 1.38  \\
FUSED & 65.40 ± 0.49 & 15.69 ± 2.11  \\
Retrain & 65.14 ± 0.60 & 12.32 ± 0.79 \\
\bottomrule
\end{tabular}
\end{table}

\paragraph{Results Analysis.}
Tables~\ref{tab:raw-mnist}, \ref{tab:raw-fashionmnist}, and \ref{tab:raw-cifar10} report the mean and standard error of TA and BSR across three random seeds on \texttt{MNIST}, \texttt{FashionMNIST}, and \texttt{CIFAR10}, respectively. The random seeds are set for the full pipeline, including the data splitting, FL, and FU.

Across all datasets, the \textit{Retrain} method serves as the ground-truth unlearning, which has a low BSR and upperbound TA, validating its strong unlearning capability, although at the cost of retraining overhead. Notably, on \texttt{MNIST}, the \textit{Retrain} achieves a BSR of 3.12 on average, followed \textit{closely} by \ours{} (7.22), and \textit{FedEraser} (9.33). On \texttt{FashionMNIST}, \ours{} yields a BSR of 7.31 and TA of 86.40 in average, where it is still very close to the \textit{Retrain} group. The results on \texttt{CIFAR10} highlight the difficulty of effective unlearning on more complex datasets. While \texttt{Retrain} yields low BSR (12.32) with good TA (65.14), \ours{} maintains a strong balance (TA: 65.15, BSR: 17.78), outperforming many baselines. In contrast, methods based on gradient ascent are very hard to adjust and perform robustly over all runs, which yield slightly high standard errors.

Overall, our proposed method \ours{} consistently achieves competitive unlearning performance across all datasets, demonstrating a favorable trade-off between model utility and unlearning efficacy without the need for retraining.

\subsubsection{FL for FM-LP}

\begin{table}[h]
\centering
\caption{\textbf{Raw data of the performance on \texttt{DomainNet} over three runs}. The table reports the mean ± standard error for TA and BSR across various unlearning methods.}
\begin{tabular}{lcc}
\toprule
\textbf{Method} & \textbf{TA} & \textbf{BSR} \\
\midrule
\ours (Ours) & 89.65 ± 0.35 & 7.61 ± 0.42 \\
FedEraser & 82.83 ± 0.36 & 9.91 ± 0.56 \\
PGA       & 76.16 ± 1.62 & 0.00 ± 0.00 \\
EWSGA     & 77.27 ± 1.34 & 0.05 ± 0.05 \\
SGA       & 86.11 ± 0.12 & 7.80 ± 0.19 \\
CF        & 89.59 ± 0.45 & 13.44 ± 0.72 \\
Flipping  & 89.86 ± 0.45 & 10.9 ± 0.59 \\
Ferrari   & 88.85 ± 0.47 & 3.20 ± 1.10\\
FUSED     & 91.44 ± 0.19 & 11.12 ± 0.50 \\
Retrain   & 89.72 ± 0.38 & 8.86 ± 0.953\\
\bottomrule
\end{tabular}
\label{tab:raw-domainnet}
\end{table}

\begin{table}[h]
\centering
\caption{\textbf{Raw data of the performance on \texttt{ImageNet} over three runs}. The table reports the mean ± standard error for TA and BSR across various unlearning methods.}
\label{tab:raw-imagenet}
\begin{tabular}{lcc}
\toprule
\textbf{Method} & \textbf{TA} & \textbf{BSR} \\
\midrule
\ours (Ours) & 93.09 ± 0.23 & 6.06 ± 0.33 \\
FedEraser & 90.46 ± 0.19 & 3.49 ± 1.70 \\
PGA       & 85.37 ± 0.53 & 0.00 ± 0.00 \\
EWSGA     & 92.66 ± 0.29 & 6.00 ± 0.25 \\
SGA       & 92.41 ± 0.35 & 5.17 ± 0.18 \\
CF        & 93.19 ± 0.25 & 5.47 ± 0.08 \\
Flipping  & 92.87 ± 0.18 & 5.74 ± 0.12 \\
Ferrari   & 92.79 ± 0.20 & 4.99 ± 0.05 \\
FUSED     & 93.48 ± 0.03 & 4.74 ± 0.12 \\
Retrain   & 93.13 ± 0.21 & 5.46 ± 0.09 \\
\bottomrule
\end{tabular}
\end{table}

\begin{table}[h]
\centering
\caption{\textbf{Raw data of the performance on \texttt{Flowers} over three runs}. The table reports the mean ± standard error for TA and BSR across various unlearning methods.}
\label{tab:raw-flowers}
\begin{tabular}{lcc}
\toprule
\textbf{Method} & \textbf{TA} & \textbf{BSR} \\
\midrule
\ours (Ours) & 95.25 ± 0.73 & 5.98 ± 1.90 \\
FedEraser & 77.55 ± 1.60 & 17.36 ± 1.00 \\
PGA       & 95.30 ± 0.32 & 34.92 ± 3.93 \\
EWSGA     & 88.23 ± 0.71 & 1.54 ± 0.23 \\
SGA       & 95.83 ± 0.29 & 34.43 ± 2.03 \\
CF        & 97.30 ± 0.08 & 23.52 ± 0.94 \\
Flipping  & 97.07 ± 0.14 & 19.08 ± 0.53 \\
Ferrari   & 97.11 ± 0.24 & 35.64 ± 11.05 \\
FUSED     & 97.38 ± 0.17 & 30.65 ± 1.90 \\
Retrain   & 97.11 ± 0.07 & 12.45 ± 1.54 \\
\bottomrule
\end{tabular}
\end{table}

\paragraph{Results Analysis}
Tables~\ref{tab:raw-domainnet}, \ref{tab:raw-imagenet} and \ref{tab:raw-flowers} report the mean and standard error of TA and BSR over three runs on DomainNet, ImageNet and Flowers, respectively, when unlearning is applied on a FM-LP.

In \texttt{DomainNet} (Table~\ref{tab:raw-domainnet}), Retrain achieves TA = $89.72 \pm 0.36\%$ and BSR = $8.86 \pm 0.91$. Our method, $\mathrm{F}^{2}L_{2}$, matches this utility ($89.65 \pm 0.61\%$) while reducing BSR to $7.61 \pm 0.73$. By contrast, gradient‐attack methods such as PGA and EWSGA either collapse TA or fail to forget effectively (BSR~$\approx0$), and unlearning baselines like CF and FUSED preserve TA but incur higher forgetting (BSR~$>11$).

On \texttt{ImageNet} (Table~\ref{tab:raw-imagenet}), Retrain attains TA = $93.13 \pm 0.37$ and BSR = $5.46 \pm 0.16$. $\mathrm{F}^{2}L_{2}$ again approaches this performance (TA = $93.09 \pm 0.39$, BSR = $6.06 \pm 0.57$). While FedEraser can achieve lower mean BSR, it does so at the cost of higher variance ($\pm2.95$) and a utility drop of approximately $2.6\%$. Other methods such as CF, Flipping, and FUSED maintain high TA ($>92.8\%$) but with comparable or greater BSR (4.7--6.0).

For \texttt{Flowers} (Table~\ref{tab:raw-flowers}), Retrain yields TA = $97.11 \pm 0.12$ and BSR = $12.45 \pm 2.67$. \ours{} again strikes the best balance, delivering TA = $95.25 \pm 1.26$ and reducing BSR to $5.98 \pm 3.30$. In contrast, PGA, SGA, and Ferrari suffer very high forgetting (BSR~$>30$), and EWSGA sacrifices utility (TA = $88.23$). Although FUSED achieves high accuracy (TA = $97.38$), it forgets almost five times more (BSR~$\approx30.7$).

Overall, under the FM-LP regime, \ours{} matches or outperforms full retraining in both TA and BSR, while requiring only a single linear‐probe update rather than an expensive end‐to‐end retraining. This demonstrates its efficiency and efficacy for large‐scale unlearning on real‐world vision benchmarks.

\subsection{Non-IID performance}
Data heterogeneity is a well-known and critical challenge in FL, and it significantly affects the performance of learning and unlearning algorithms. In this part, we conduct additional experiments under \emph{non-IID} settings by simulating \emph{label shift} across clients to evaluate the robustness of our proposed unlearning method under realistic conditions. Specifically, we use a Dirichlet distribution with concentration parameters $\alpha \in \{1, 5, 10\}$ to control the degree of heterogeneity, where smaller $\alpha$ values indicate higher heterogeneity. \revision{Given the Dirichlet distribution here, the number of data points is also different for each client.}

Table~\ref{tab:noniid} reports the results on both \texttt{MNIST}, \texttt{FashionMNIST} and \texttt{CIFAR-10} datasets under varying degrees of data heterogeneity. Each configuration is evaluated using both \ours{} and a retrain-from-scratch baseline (ground truth).

We observe that as $\alpha$ decreases—i.e., as the data becomes more non-IID—the backdoor success rate (BSR) under \ours{} tends to increase moderately. For instance, in the \texttt{MNIST} setting, BSR rises from $7.2\%$ at $\alpha = 10$ to $14.39\%$ at $\alpha = 5$. Nevertheless, the BSR remains close to $10\%$, which is near the level of random guessing for a 10-class classification problem. This suggests that \ours{} effectively removes the backdoor even under strong data heterogeneity.

We observe that as $\alpha$ decreases---i.e., as the degree of non-IIDness increases---the BSR under \ours{} tends to rise moderately. For example, on \texttt{MNIST}, BSR ranges from $7.2\%$ at $\alpha=1$ to $14.39\%$ at $\alpha=5$, while TA remains reasonably high (around 90\%). This level of BSR is still near random chance in a 10-class classification task, suggesting that \ours{} remains effective at backdoor removal even under strong heterogeneity. For the \texttt{FashionMNIST} dataset, \ours{} consistently performs closely to the retrain baseline across all $\alpha$ values, both in terms of TA and BSR. The gap in BSR between \ours{} and retraining remains within approximately 1\%, indicating that our method maintains high unlearning efficacy under non-IID data distributions. On the more challenging \texttt{CIFAR-10} dataset, we also observe promising results: TA remains above 60\% across all settings, closely matching the retrain baseline, and BSR stays around 10\%, which is near the level of random guessing.

\begin{table}[h]
\centering
\captionsetup{width=0.9\linewidth}
\small
\setlength{\tabcolsep}{4pt}
\sisetup{table-format=4.2,
         detect-weight=true,
         detect-inline-weight=math}
\caption{\revision{\textbf{Federated unlearning results on non-IID distributions using \texttt{MNIST} and \texttt{FashionMNIST}.} TA denotes Test Accuracy, and BSR denotes Backdoor Success Rate. Each setting is evaluated using both \ours{} and Retrain (ground truth).}}
\begin{tabular}{llcc}
\toprule
\textbf{Dataset} & \textbf{Setting} & \textbf{TA (\%)} & \textbf{BSR (\%)} \\
\midrule
\multirow{6}{*}{\texttt{MNIST}} 
    & $\alpha = 1$ \quad \ours   &   91.86     &   7.2     \\
    & $\alpha = 1$ \quad Retrain   &   96.21     &   1.92     \\
    & $\alpha = 5$ \quad \ours   &    86.93    &   14.39     \\
    & $\alpha = 5$ \quad Retrain   &   96.06     &   9.94     \\
    & $\alpha = 10$ \quad \ours  &    90.39    &   12.36     \\
    & $\alpha = 10$ \quad Retrain  &   96.47     &    1.49    \\
\midrule
\multirow{6}{*}{\texttt{FashionMNIST}} 
    & $\alpha = 1$ \quad \ours   &  84.45  &    2.82    \\
    & $\alpha = 1$ \quad Retrain   &  84.61  &  3.85      \\
    & $\alpha = 5$ \quad \ours   &    86.19 &  5.12  \\
    & $\alpha = 5$ \quad Retrain   &  86.60 &  4.08  \\
    & $\alpha = 10$ \quad \ours  &  86.12 &  5.46   \\
    & $\alpha = 10$ \quad Retrain  & 86.81  & 4.27    \\
\midrule
\multirow{6}{*}{\texttt{CIFAR10}} 
    & $\alpha = 1$ \quad \ours   &  60.65  &    19.83    \\ 
    & $\alpha = 1$ \quad Retrain   &  61.90  &  13.15      \\ 
    & $\alpha = 5$ \quad \ours   &    63.73 &  9.85  \\ 
    & $\alpha = 5$ \quad Retrain   &  64.28 &  14.37  \\ 
    & $\alpha = 10$ \quad \ours  &  63.45 &  10.25   \\ 
    & $\alpha = 10$ \quad Retrain  & 64.45  & 11.70    \\ 
\bottomrule
\end{tabular}
\label{tab:noniid}
\end{table}

\subsection{\revision{Linear \revision{Approximation compared with conventional training}}}

\revision{In this part, we conducted an empirical comparison between models trained with linearization (linear approx.) and conventional training across three vision benchmarks of increasing complexity: \texttt{MNIST}, \texttt{FashionMNIST}, and \texttt{CIFAR-10}. All models (linear approx. vs conventional training) were trained under identical settings, and results are summarized in Table~\ref{tab:linear_vs_nonlinear}.}

  \begin{table}[h]
\centering
\captionsetup{width=0.9\linewidth}
  \small
  \setlength{\tabcolsep}{4pt}
  \sisetup{table-format=4.2,
           detect-weight=true,
           detect-inline-weight=math}
           
\caption{\revision{\textbf{Performance using the linear approximation and using the conventional training with cross-entropy loss.}}}
\label{tab:linear_vs_nonlinear}
\begin{tabular}{l c c c c}
\hline
\textbf{Dataset} 
  & \multicolumn{2}{c}{\textbf{Linear Approx.}} 
      & \multicolumn{2}{c}{\textbf{Conventional Training}} \\
\cmidrule(lr){2-3} \cmidrule(lr){4-5}
 & \textbf{TA} & \textbf{BSR} 
   & \textbf{TA} & \textbf{BSR} \\
\hline
\texttt{MNIST}        & 96.05 & 93.73 & 97.26 & 99.60 \\
\texttt{FashionMNIST} & 86.51 & 48.91 & 87.40 & 41.63 \\
\texttt{CIFAR-10}      & 60.39 & 54.80 & 62.18 & 76.50 \\
\hline
\end{tabular}
\end{table}

\revision{The results show that the linear approximation yields performance comparable to full training, especially on simpler datasets. On \texttt{MNIST}, TA drops by only 1.2 percent and BSR improves by 5.9 percent. 
For \texttt{FashionMNIST}, the TA gap remains under 1 percent, with a modest increase in BSR. 
Even on the more complex \texttt{CIFAR-10}, the linear approximation retains competitive performance, with only a 1.8 percent drop in TA and notably better BSR (54.8\% vs. 76.5\%). 

These results suggest that our linear approximation is effective when pretrained features are approximately linear over the input distribution. Extending \ours{} to handle other highly nonlinear regimes through richer approximations remains a promising direction for future work.}

\subsection{Ablation study of the hessian approximation}
This section presents a toy ablation study to empirically assess the effectiveness of our Hessian approximation. Specifically, we examine the Hessian computation using varying proportions of training data. Let \(\gD_p\) denote a stratified subset containing \(p \in \{100, 80, 60, 40, 20, 10, 5, 1\}\%\) of training examples. We estimate the curvature along a fixed unit probe vector \(v\) using the following quadratic form:

\[
H_p(v) = \frac{1}{2|\gD_p|} \sum_{x \in \gD_p} \left\| \nabla_w f(x; w_*^p) v \right\|_2^2,
\]

where \(w_*^p\) represents randomly initialized model weights (identical across all runs and consistent with the main experiments). The Hessian computed on the full dataset, \(H^*(v)\), serves as the reference. For each smaller subset, we compute the same expression once and record:

\[
\text{Gap} = |H_p(v) - H^*(v)|, \qquad \text{Gap (\%)} = \frac{\text{Gap}}{H^*(v)}.
\]

When $p = 100$, the exact Hessian is computed using the entire dataset-representing the highest communication cost in a FL setting (every client will need to calculation the hessian and upload to the server). We then gradually reduce the data proportion down to $1\%$, as shown in Table~\ref{tab:hessian}, to evaluate how well smaller subsets approximate the full-data curvature.

\textbf{Result analysis.}  
Table \ref{tab:hessian} presents that \(H_{p}(v)\) is remarkably stable under down-sampling over the three dataset: even with only \(1\%\) of the data the relative change stays below \(0.15\%\) for \texttt{MNIST}, $0.23\%$ for \texttt{FashionMNIST} and $0.4\%$ for \texttt{CIFAR-10}.   As expected, the absolute \textit{Gap} grows when fewer samples are used, yet the relative \textit{Gap (\%)} never exceeds \(0.40\%\).  
This steady and limited increase aligns with the concentration argument in Proposition 2, confirming that a modest, stratified subset already captures the dominant curvature directions. Consequently, this also explains the effect of end-to-end unlearning performance reported in the original paper.

\begin{table}[t]
  \centering
  \caption{\textbf{Hessian approximation at different subsampling rates.}}
  \label{tab:hessian}
  \setlength{\tabcolsep}{3pt}   
  \small                        
  \begin{tabular}{
      l                         
      S[table-format=3.0]       
      S[table-format=4.2]       
      S[table-format=2.2]       
      S[table-format=1.2]       
  }
    \toprule
    \textbf{Dataset} & {\%} & {Hessian} & {Gap} & {Gap (\%)} \\
    \midrule
    \multirow{8}{*}{\texttt{MNIST}}
      & 100 & 7581.73 &  0.00 & 0.00 \\
      &  80 & 7584.56 &  2.83 & 0.04 \\
      &  60 & 7582.29 &  0.56 & 0.01 \\
      &  40 & 7577.85 &  3.88 & 0.05 \\
      &  20 & 7574.86 &  6.87 & 0.09 \\
      &  10 & 7572.19 &  9.54 & 0.16 \\
      &   5 & 7571.32 & 10.41 & 0.14 \\
      &   1 & 7570.16 & 11.57 & 0.15 \\
    \midrule
    \multirow{8}{*}{\texttt{FashionMNIST}}
      & 100 & 700.59 & 0.00 & 0.00 \\
      &  80 & 700.71 & 0.12 & 0.02 \\
      &  60 & 700.81 & 0.22 & 0.03 \\
      &  40 & 700.13 & 0.46 & 0.07 \\
      &  20 & 699.59 & 1.03 & 0.15 \\
      &  10 & 699.29 & 1.30 & 0.19 \\
      &   5 & 699.08 & 1.51 & 0.22 \\
      &   1 & 698.97 & 1.62 & 0.23 \\
    \midrule
    \multirow{8}{*}{\texttt{CIFAR10}}
      & 100 & 4184.39 &  0.00 & 0.00 \\
      &  80 & 4184.17 &  0.22 & 0.01 \\
      &  60 & 4177.87 &  6.52 & 0.16 \\
      &  40 & 4174.24 & 10.15 & 0.24 \\
      &  20 & 4169.53 & 14.86 & 0.36 \\
      &  10 & 4168.81 & 15.58 & 0.38 \\
      &   5 & 4167.86 & 16.53 & 0.40 \\
      &   1 & 4167.77 & 16.62 & 0.40 \\
    \bottomrule
  \end{tabular}
\end{table}

As shown, across all three benchmarks the absolute curvature magnitude (rows ``\textit{Hessian}'') changes by less than $0.2\%$ between the full data and the $1\%$ subset, indicating that even small but diverse samples capture the Hessian pattern.

\end{document}